\documentclass{article}

\usepackage{iftex}
\ifPDFTeX
  \usepackage[utf8]{inputenc}
  \usepackage[T1]{fontenc}
\fi
\usepackage[preprint]{neurips_2026}

\usepackage{iftex}

\usepackage{microtype}
\usepackage{amsmath,amssymb,amsthm,mathtools}
\usepackage{booktabs}
\usepackage{multirow}
\usepackage{graphicx}
\usepackage[table]{xcolor}
\usepackage{enumitem}
\usepackage{hyperref}
\usepackage[nameinlink,noabbrev]{cleveref}
\usepackage{array}
\usepackage{caption}
\usepackage{subcaption}
\usepackage{tikz}
\usepackage{amsthm}

\usepackage{natbib}

\graphicspath{{figures/method/}{figures/experiments/}{figures/cases/}{figures/appendix/}}
\hypersetup{
  colorlinks=true,
  linkcolor=blue,
  citecolor=blue,
  urlcolor=blue
}

\newtheorem{proposition}{Proposition}[section]

\theoremstyle{remark}

\usepackage{alltt,capt-of}
\usepackage{tcolorbox}
\newenvironment{qbox}
{\begin{tcolorbox}[colback=white, width=0.95\linewidth, center, left=1pt,right=1pt,top=1pt,bottom=1pt]}
{\end{tcolorbox}}

\usepackage{colortbl}
\usepackage[misc]{ifsym}

\title{Can Attribution Predict Risk? From Multi-View Attribution to Planning Risk Signals in End-to-End Autonomous Driving}

\author{%
    Le Yang$^{1}$, Haijun Liu$^{1}$, Jiawei Liang$^{1}$, ShangQuan Sun$^{2}$, Xiaochun Cao$^{1}$ \\
    \small$^{1}$Sun Yat-sen University~~~~$^2$Nanyang Technological University\\ 
    \small \texttt{yangle26@mail2.sysu.edu.cn}~~~~~~~~\texttt{caoxiaochun@mail.sysu.edu.cn} \\
}

\begin{document}

\maketitle

\begin{abstract}
    End-to-end autonomous driving models generate future trajectories from multi-view inputs, improving system integration but introducing opaque decisions and hard-to-localize risks. Existing methods either rely on auxiliary monitoring models or generate textual explanations, but are decoupled from the planning process and fail to reveal the visual evidence underlying trajectory generation. While attribution offers a direct alternative, planning differs from image classification by taking six-view camera images as input and predicting continuous multi-step trajectories, requiring attribution to capture both critical views and regions and their influence on outputs. Moreover, whether attribution maps can support risk identification remains underexplored. 
    To address this, we propose a hierarchical attribution framework for end-to-end planning. Specifically, using L2 consistency with the original trajectory as the objective, we design a coarse-to-fine region attribution strategy that searches candidate regions across the full six-view input and refines attribution within them. We further extract three attribution statistics as predictive signals for planning risk, including attribution entropy to measure how concentrated the planner's reliance is over the joint visual space, within-camera spatial variance to characterize how spread out the attribution is within each view, and cross-camera Gini coefficient to quantify how unevenly attribution is distributed across the six cameras. Experiments on BridgeAD, UniAD, and GenAD show that these statistics correlate with planning risk, achieving Spearman correlations of $0.30 \pm 0.07$ with trajectory error and AUROC of $0.77 \pm 0.04$ for collision detection. The signal generalizes to held-out scenes with negligible degradation and remains stable under an alternative attribution baseline.
\end{abstract}

\section{Introduction}


End-to-end autonomous driving has recently gained increasing attention by directly mapping raw sensor inputs to planning outputs and reducing the dependence on hand-crafted modular pipelines~\cite{hu2023uniad, jiang2023vad, sun2024sparsedrive, zhang2025bridgead}. Despite its promise, safety remains a major obstacle to real-world deployment, as planning errors may directly result in unsafe maneuvers, collisions, or other high-risk driving behaviors~\cite{leong2025steering,liang2025interaction}. As end-to-end systems are increasingly explored in industrial scenarios, such as Tesla FSD V12~\cite{elluswamy2024fsd}, it becomes critical to understand what planning decisions rely on, not only whether a trajectory is wrong, but also what visual evidence supports it and whether such reliance is stable~\cite{chen2024e2esurvey,kuznietsov2024safex}.

Interpretability in autonomous driving helps reveal the visual evidence and decision logic underlying planning behaviors, thereby enhancing the transparency and trustworthiness of autonomous driving systems~\cite{kuznietsov2024safex,wen2024road}. Existing methods either generate language-based explanations~\cite{sima2024drivelm,xu2025drivegpt4v2,zhou2025autovla} or train auxiliary monitors~\cite{kuznietsov2024safex,keser2023interpretable} for risk detection. However, language explanations remain opaque in their reasoning process~\cite{turpin2023unfaithful,barez2025cot_not_xai,xie2025drivebench}, while auxiliary monitors are decoupled from the planner and mainly capture scene-level rather than decision-level risk. Since dangerous scenes do not necessarily cause planner failures~\cite{wang2025redoubt}, and benign-looking scenes may still induce unsafe decisions, risk should be analyzed from the planner's own behavior~\cite{antonante2023task,westhofen2023criticality}.

Attribution~\cite{petsiuk2018rise,chen2024less,chen2025vps} offers a natural way to analyze decision-level risk by linking planner outputs to visual evidence. Prior studies further suggest that attribution drift can monitor model performance~\cite{nigenda2022amazon,wang2023feature,decker2024explanatory}. However, existing attribution methods for autonomous driving~\cite{karim2022toward,cultrera2020explaining} mainly treat attribution as a post-hoc explanation or visualization tool~\cite{chen2024e2esurvey}, rather than as a signal for risk prediction. Although some methods use attribution~\cite{hacker2023insufficiency} or attention~\cite{stocco2022thirdeye} for risk monitoring, they are not tailored to end-to-end planning, where modern planners~\cite{zhang2025bridgead,hu2023uniad} consume six-view camera images and produce continuous multi-step trajectories instead of discrete classification scores or simple control outputs. This raises the following question:

\begin{qbox}
    \begin{center}
    \small \textbf{\textit{Can attribution not only explain multi-view end-to-end planning, but also predict potential risk?}}
    \end{center}
\end{qbox}

Motivated by this, we propose a hierarchical attribution framework for multi-view end-to-end driving planners. Based on high-fidelity subset-selection attribution~\cite{chen2024less,chen2025vps,chen2025eagle}, we treat the six camera views as a unified attribution space and use L2 trajectory consistency, rather than class confidence, as the attribution objective. To improve efficiency, we introduce a coarse-to-fine search that globally ranks region groups across all views and then refines subregion-level attributions within each group. This produces faithful attribution maps that connect trajectory outputs to visual evidence. 


Building on this, we investigate whether attribution maps can reflect the planner’s decision-level risk. From a heuristic perspective, robust planning should avoid over-reliance on limited visual evidence, since dominance by a small region, localized area, or single view may make trajectories sensitive to perturbations or occlusions. We therefore derive three complementary statistics: attribution entropy for global concentration over the six-view space, within-camera spatial variance for localized concentration within each view, and cross-camera Gini coefficient for imbalance across views. Lower entropy, lower spatial variance, and higher Gini indicate stronger global, within-view, and cross-view over-reliance, respectively, and may serve as attribution-based warning signals for planning risk.


Experiments are conducted on the validation split of the nuScenes dataset~\cite{caesar2020nuscenes} with three representative end-to-end autonomous driving models: BridgeAD~\cite{zhang2025bridgead}, UniAD~\cite{hu2023uniad}, and GenAD~\cite{zheng2024genad}. We consider trajectory error and predicted collision as two types of planning risk. The three attribution statistics are fitted with ridge regression for trajectory error and logistic regression for collision prediction, and evaluated using Spearman correlation and AUROC, respectively. When both fit and evaluated on the full validation set, the statistics achieve an AUROC of $0.77 \pm 0.04$ across the three planners for collision prediction, indicating that attribution maps contain signals correlated with collision risk. To assess generalization, we further conduct 20 random splits, fitting the statistics on 80\% of the scenes and evaluating on the held-out 20\%. The resulting AUROC reaches $0.77 \pm 0.09$ across the three planners, demonstrating that the attribution-based risk signal transfers to unseen scenes with negligible degradation. Finally, replacing our attribution method with RISE~\cite{petsiuk2018rise} gives a weaker but consistent signal across planners, with an AUROC of $0.67 \pm 0.11$, suggesting that the proposed risk statistics are not algorithm-specific.
 In summary, our contributions are as follows:
\begin{itemize}
    \item We investigate whether attribution can go beyond post-hoc explanation and serve as a predictive signal for planning risk in end-to-end autonomous driving.
    \item A coarse-to-fine attribution method for multi-view planners, treating six camera views as a unified search space and using L2 trajectory consistency as the attribution objective.
    \item Three attribution statistics that quantify global, within-view, and cross-view over-reliance as predictive signals for planning risk.
    \item Experiments on nuScenes with BridgeAD, UniAD, and GenAD validate risk correlation, collision prediction, and unseen-sample generalization.
\end{itemize}

\section{Related Work}
 
\textbf{Explanation in End-to-End Autonomous Driving:} 
Some autonomous driving models improve transparency by generating textual explanations~\cite{sima2024drivelm,xu2025drivegpt4v2,zhou2025autovla}, such as DriveGPT4~\cite{xu2024drivegpt4,xu2025drivegpt4v2}, which uses multimodal large language models for language-based driving reasoning. Other methods enhance interpretability through object-level relevance analysis~\cite{renz2022plant} or attribution-based explanations~\cite{karim2022toward,cultrera2020explaining} for simplified classification-based driving models. However, these methods mainly provide post-hoc explanations rather than predictive warning signals for potential planning risks.

\textbf{Attribution for Risk Monitoring:} INFORMER~\cite{shu2024informer} leverages inter-attribution-map relationships to generate risk warnings in medical image diagnosis, and Hacker \textit{et al.}~\cite{hacker2023insufficiency} use attribution-prototype matching to warn against unreliable traffic sign recognition. However, these methods are mainly developed for relatively constrained classification settings, where the prediction target is a discrete label and the visual input is much simpler than multi-view driving scenes. ThirdEye~\cite{stocco2022thirdeye} uses attention-map entropy to warn against unreliable decisions, but attention is not necessarily faithful to the model's actual decision evidence, especially in modern multi-component end-to-end planners where internal attention may be decoupled from spatial visual evidence. In contrast, we target end-to-end driving planning by designing a faithful multi-view attribution algorithm and deriving three statistics to analyze how attribution patterns relate to decision-level planning risk.

\section{Hierarchical Region Attribution}
\label{sec:method}

\begin{figure*}[t]
\centering
\includegraphics[width=\linewidth]{figures/Untitled-6.pdf}
\caption{\textbf{Hierarchical attribution pipeline.} Multi-camera inputs are partitioned into SLICO superpixels and grouped into coarse regions (left). Candidate subsets are evaluated by an objective function defined on the planner’s predicted trajectory (middle). A coarse-to-fine greedy search first selects coarse regions and then refines subregions, producing the per-pixel saliency tensor (right).}
\label{fig:method-pipeline}
\end{figure*}

\subsection{Task Formulation}

Consider a multi-camera end-to-end planner $P$ that takes synchronized images $x=(x^{(1)},\dots,x^{(C)})$ from $C$ cameras and outputs a planned trajectory $\hat y=P(x)$. Our interpretability analysis asks a single question: which visual regions does the planner rely on to produce this particular trajectory? We therefore focus on subregions that contribute to the output trajectory. Using SLICO superpixel segmentation~\cite{achanta2012slic}, we partition each camera view $x^{(c)}$ into subregions $V^{(c)}=\{v^{(c)}_1,\dots,v^{(c)}_{N_c}\}$ and merge all views into a joint candidate set
\[
  V=\bigsqcup_{c=1}^{C}V^{(c)}.
\]
We cast attribution as a subset selection problem~\cite{chen2024less,chen2025vps}:
\begin{equation}
  \max_{\pi\in\mathcal{P}(V),\ |\pi|\le K}
  \sum_{r=1}^{|\pi|} F(\pi_{1:r}),
  \label{eq:ordered}
\end{equation}
where $\mathcal{P}(V)$ is the set of all ordered subsets of $V$, $\pi$ is one such ordered sequence, $\pi_{1:r}$ its length-$r$ prefix, $K$ the maximum explanation length, and $F(\cdot)$ a set function measuring how well a set of subregions explains the current planning decision. This objective scores both the final selected set and every intermediate prefix as the explanation is built. The remaining question is how to design $F(\cdot)$ and optimize it efficiently.

\subsection{Objective Function}

We design a lightweight and faithful attribution algorithm for explaining trajectories produced by end-to-end driving planners. Building on recent region-selection interpretability work~\cite{chen2025vps,chen2025eagle}, we instantiate~\eqref{eq:ordered} with a set scoring function $F$ based on \emph{sufficiency} and \emph{necessity}, and rank regions by marginal-gain search. Motivated by classical greedy set selection~\cite{nemhauser1978analysis}, this procedure prioritizes regions with the largest incremental contribution, while not assuming that the neural-network-induced objective is strictly submodular. Specifically, $F$ enforces two complementary criteria: when the selected regions are kept, the planner's decision should remain close to that on the full input, when they are removed, the decision should change substantially.

\textbf{Sufficiency score.}
If keeping only the regions in $S$ still produces a trajectory close to $\hat y$ on the full input, then $S$ has captured this critical evidence. For a candidate subset $S$, we define the sufficiency score as
\begin{equation}
  F_{\mathrm{suf}}(S)=-\|\hat y(S)-\hat y\|_2,
  \label{eq:sufficiency}
\end{equation}
where $x_S$ retains the pixels covered by $S$ and replaces the rest with a baseline value, and $\hat y(S)=P(x_S)$ is the planner's output on this perturbed input. The closer this value is to zero, the better $S$ alone reproduces the planned trajectory.

\textbf{Necessity score.}
 If removing $S$ from the input causes the trajectory to deviate substantially, then $S$ is what the decision actually rests on. For a candidate subset $S$, we define the necessity score as
\begin{equation}
  F_{\mathrm{nec}}(S)=\|\hat y(V\setminus S)-\hat y\|_2,
  \label{eq:necessity}
\end{equation}
where $\hat y(V\setminus S)=P(x_{V\setminus S})$ is the planning output after $S$ has been removed. Larger values indicate that the planner cannot maintain its original behavior in the absence of $S$.

\textbf{Combined objective.}
A subset that scores high on only one criterion can mislead: high sufficiency with low necessity may simply mark redundant background, while high necessity with low sufficiency may reflect a generic perturbation that carries little explanatory content. A linear combination of the two jointly characterizes the sufficiency and necessity of an end-to-end planner's trajectory:
\begin{equation}
  F(S)=\lambda_{\mathrm{suf}}\,F_{\mathrm{suf}}(S)+\lambda_{\mathrm{nec}}\,F_{\mathrm{nec}}(S).
  \label{eq:objective}
\end{equation}

\subsection{Hierarchical Coarse-to-Fine Search}
\label{sec:hier-search}

Optimizing~\eqref{eq:ordered} under~\eqref{eq:objective} is NP-hard. Prior work~\cite{chen2024less} greedily adds the region with the largest marginal gain at each step, but a forward pass through an end-to-end driving planner is far more expensive than through the classifiers or detectors used in earlier post-hoc attribution~\cite{chen2024less,chen2025vps}, making exhaustive evaluation over the joint multi-camera space prohibitive. We observe that driving planners first attend to a few spatially adjacent, semantically related scene parts, such as the upcoming intersection, an adjacent lane, or the region around the leading vehicle~\cite{chitta2022transfuser,shao2023reasonnet}, and only then to finer cues within them, such as lane boundaries, agent contours, or traffic signals~\cite{jiang2023vad,chitta2021neat}. Attribution search can mirror this hierarchy. We therefore propose a \emph{coarse-to-fine search}: aggregate adjacent subregions into \emph{groups}, rank groups globally at low cost (\emph{coarse stage}), then use this ranking as a prior for ranking subregions within each group (\emph{refinement stage}).

\textbf{Coarse stage.}
We merge spatially adjacent subregions into groups and run the coarse-stage greedy search over these groups, yielding an ordered sequence $G_{(1)},G_{(2)},\dots,G_{(L_{\mathrm{c}})}$ of length $L_{\mathrm{c}}$ together with marginal gains $d_1,d_2,\dots,d_{L_{\mathrm{c}}}$, where
\[
  d_j=F\!\Bigl(\bigcup_{i\le j}G_{(i)}\Bigr)-F\!\Bigl(\bigcup_{i<j}G_{(i)}\Bigr).
\]
Note that the coarse stage uses the \emph{same} objective $F$ as subregion-level search. Only the search unit changes, from individual regions to groups. The corresponding group-level objective is $F_{\mathrm{grp}}(\mathcal{G})\triangleq F\bigl(\bigcup_{G\in\mathcal{G}}G\bigr)$. Under an ideal submodular objective, grouping does not break the selection structure of marginal-gain greedy search (see ~\Cref{prop:coarse-stage-grouped-submodularity} in the appendix).

\textbf{Refinement stage.}
Each group from the coarse stage now enters refinement. For group $G_{(j)}$, refinement does not restart from the empty set. Instead, it uses the union of all preceding groups as a conditional prefix,
\[
  \mathcal{C}_j=\bigcup_{i<j}G_{(i)}.
\]
The algorithm then runs greedy ranking over the subregions in $G_{(j)}$, at each step selecting
\[
  \arg\max_{v\in G_{(j)}\setminus S_{\mathrm{local}}}\Delta(v\mid\mathcal{C}_j\cup S_{\mathrm{local}}),
\]
where $S_{\mathrm{local}}$ is the set of subregions already selected inside the group. The advantage is that the coarse stage's inter-group ordering is preserved in full as prior information: because the relative order of groups stays fixed, every local search inherits the preceding groups as its conditional prefix. Refinement within each group, therefore, proceeds independently and naturally parallelizes in implementation. Finally, we take each region's marginal gain in the sequence as its attribution score and map it back to the pixel locations in the corresponding camera image, yielding the planning saliency tensor $T\in\mathbb{R}^{C\times H\times W}$. The attribution statistics in the next section are built directly on $T$.

\section{From Attribution Maps to Risk Signals}
\label{sec:statistics}

\begin{figure*}[t]
\centering
\includegraphics[width=\linewidth]{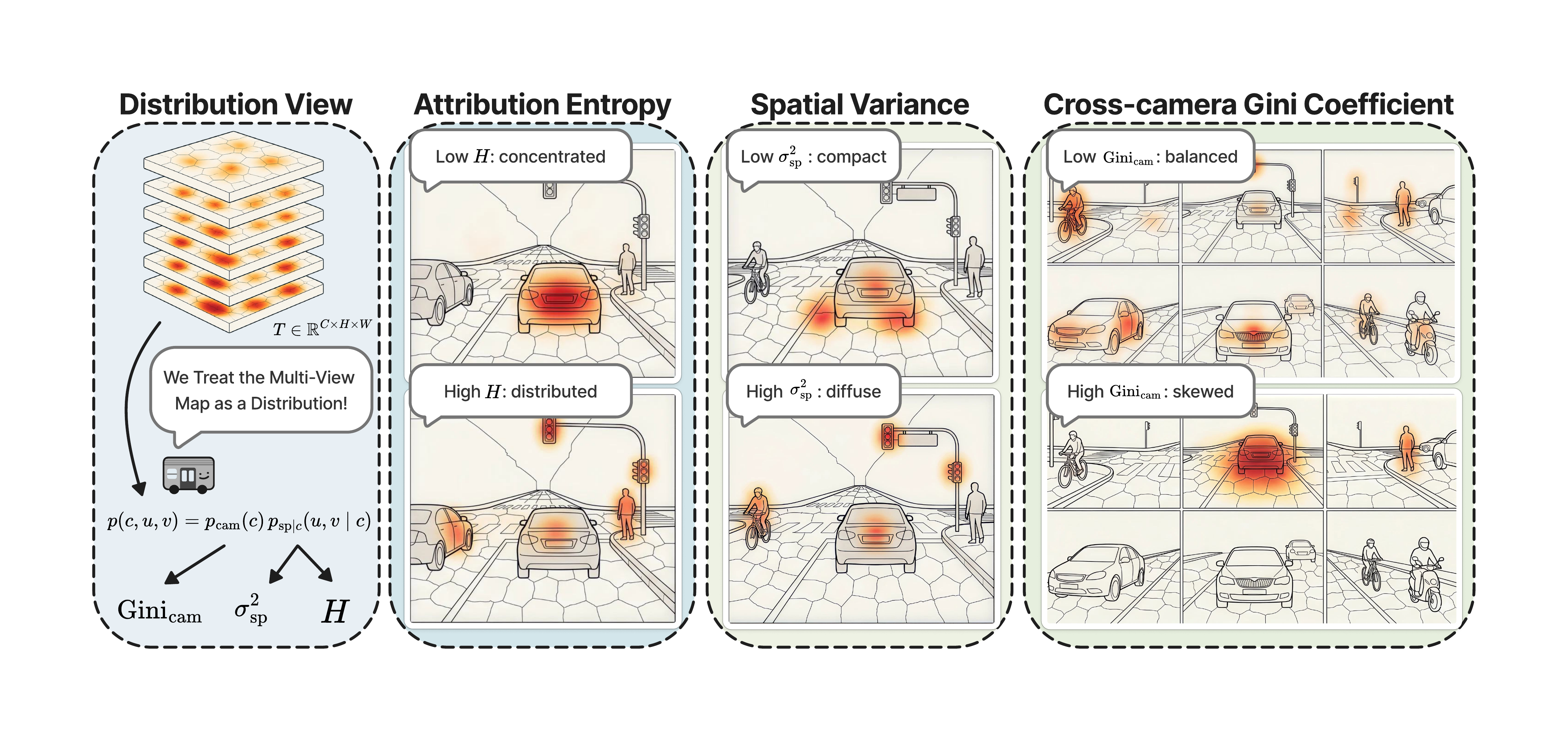}
\caption{\textbf{Three attribution statistics from a distributional view of multi-camera saliency.} Attribution entropy for global concentration, within-camera spatial variance for spatial dispersion within each view, and cross-camera Gini coefficient for imbalance across views.}
\label{fig:three-statistics}
\end{figure*}


We treat the saliency tensor $T\in\mathbb{R}^{C\times H\times W}$ for each sample as a structured attribution distribution over the six-view input. Let $M^{(c)}=|T^{(c)}|\in\mathbb{R}^{H\times W}$ denote the non-negative attribution map for camera $c$. By normalizing attribution weights over all cameras and pixel locations, we obtain a probability distribution $p(c,u,v)$ over the joint multi-camera pixel space, which can be factorized as
\begin{equation}
  p(c,u,v)=p_{\mathrm{cam}}(c)\,p_{\mathrm{sp}\mid c}(u,v\mid c),
  \label{eq:chain-factor}
\end{equation}
where $p_{\mathrm{cam}}(c)=\sum_{u,v}p(c,u,v)$ is the camera-level marginal distribution and $p_{\mathrm{sp}\mid c}(u,v\mid c)$ is the spatial distribution within camera $c$. This factorization naturally separates two levels of attribution structure: how visual reliance is allocated across cameras, and how it is distributed within each camera view.

Based on this structure, we derive three complementary statistics to quantify potential over-reliance in multi-view planning attribution. First, an entropy-based concentration score measures whether attribution is globally concentrated on a small portion of the full six-view pixel space. Second, within-camera spatial variance characterizes whether attribution is localized around specific spatial positions within individual views. Third, the cross-camera Gini coefficient measures whether attribution is disproportionately assigned to one or a few cameras. These statistics capture over-reliance at global, within-view, and cross-view levels, respectively, and provide comparable scalar signals for analyzing planning risk across samples. Our choice of statistics is inspired by prior attribution analyses that quantify distribution concentration, spatial dispersion, and allocation imbalance~\cite{barsbey2025largelr,gu2021lam}, and is adapted here to the multi-camera trajectory planning setting.

\textbf{Attribution entropy.}
The global concentration of attribution reflects whether a planning decision relies on a small portion of the full multi-view visual space. We define attribution entropy as
\begin{equation}
  H=-\sum_{c,u,v} p(c,u,v)\log p(c,u,v) .
  \label{eq:stats-effective-support}
\end{equation}
Here, $H$ measures how uniformly attribution is distributed over the joint pixel space. A smaller $H$ indicates that the planner relies on a limited set of critical visual locations, suggesting stronger global over-reliance.

\textbf{Within-camera spatial variance.}
Conditioned on each camera view, the spatial spread of attribution reflects how localized the planner's visual reliance is within the image plane. We first compute the attribution centroid for camera $c$ as
\[
\bar{\mathbf{r}}^{(c)}=\sum_{u,v}\frac{M^{(c)}_{u,v}}{m_c}\mathbf{r}_{u,v},
\]
where $\mathbf{r}_{u,v}\in\mathbb{R}^2$ is the 2D coordinate of pixel $(u,v)$ and $m_c=\sum_{u,v}M^{(c)}_{u,v}$ is the total attribution mass of camera $c$. The within-camera spatial variance is then defined as
\begin{equation}
  \sigma^2_{\mathrm{sp}}
  =
  \sum_{c}
  \frac{m_c}{\sum_{c'}m_{c'}}
  \sum_{u,v}
  \frac{M^{(c)}_{u,v}}{m_c}
  \left\|
  \mathbf{r}_{u,v}-\bar{\mathbf{r}}^{(c)}
  \right\|^2 .
  \label{eq:stats-spatial-variance}
\end{equation}
A smaller $\sigma^2_{\mathrm{sp}}$ indicates that attribution is concentrated around localized regions within individual views, suggesting stronger within-view over-reliance, whereas a larger value indicates more spatially dispersed evidence.

\textbf{Cross-camera Gini coefficient.}
For a multi-camera planner, attribution allocation across views reflects whether the planning decision depends disproportionately on particular cameras. We quantify this cross-view imbalance with the cross-camera Gini coefficient,
\begin{equation}
  \mathrm{Gini}_{\mathrm{cam}}
  =
  \frac{\sum_{c}\sum_{c'}\left|m_c-m_{c'}\right|}
       {2C\sum_{c}m_c},
  \label{eq:stats-camera-gini}
\end{equation}
where $m_c$ is the total attribution mass of camera $c$. A higher $\mathrm{Gini}_{\mathrm{cam}}$ indicates that attribution is concentrated in a few views, suggesting stronger cross-view over-reliance, whereas a lower value indicates more balanced reliance across cameras.

\section{Experiments}
\label{sec:experiments}

\subsection{Experimental Setup}
\label{sec:exp-setup}

\textbf{Dataset.}
All experiments are conducted in an open-loop setting on the nuScenes validation set~\cite{caesar2020nuscenes}, covering all validation samples across its 150 validation scenes.

\textbf{Planning-risk proxies.}
We define two planning-risk proxies, both computed from the predicted trajectory $\hat y = (\hat y_1, \dots, \hat y_T)$ and the ground-truth annotations. The first is the average displacement error,
\begin{equation}
    \mathrm{ADE}=\frac{1}{T}\sum_{t=1}^{T}\|\hat{y}_t-y_t^*\|_2,
\end{equation}
where $y^*$ is the ground-truth trajectory, ADE serves as a continuous risk proxy. The second is binary, flagging whether the ego footprint $\mathcal{E}(\hat y_t)$ at any predicted waypoint intersects an obstacle bounding box $\mathcal{B}_t^{(k)}$:
\begin{equation}
    \mathrm{collision\_any}
    =
    \mathbf{1}\!\left[
    \exists\, t,k :
    \mathcal{E}(\hat{y}_t) \cap \mathcal{B}_t^{(k)} \neq \emptyset
    \right].
\end{equation}

\textbf{Control variables.}
The association could reflect scene object configuration rather than the planner's visual dependency. We rule this out with three controls derived from nuScenes annotations and multi-camera geometry (visible-object count $N_{\mathrm{obj}}$, spatial spread $D_{\mathrm{obj}}$, and cross-camera imbalance $\mathrm{Gini}_{\mathrm{obj}}$), each structurally matched to one attribution statistic. Formulas are in the appendix.

\textbf{Planners.}
We evaluate three end-to-end planners with markedly different architectures, BridgeAD~\cite{zhang2025bridgead}, UniAD~\cite{hu2023uniad}, and GenAD~\cite{zheng2024genad}, using their publicly released pretrained weights without any modification.

\subsection{Attribution Statistics Track Planning Risk}
\label{sec:exp-association}


We test whether attribution statistics carry stable associations with planning risk proxies. Univariate entries report direct rank correlations or AUROC scores, while combined entries fit ridge regression to ADE and logistic regression to $\mathrm{collision\_any}$. All values in Table~\ref{tab:association-main} are sign-aligned so that larger values indicate higher risk, and $95\%$ confidence intervals from 100 scene-clustered bootstrap resamples are reported in Appendix~\ref{app:ci-association}.

\begin{table}[t]
\centering
\footnotesize
\caption{Association between attribution statistics and planning risk. $\rho_{\text{ADE}}$ denotes the Spearman correlation with ADE, and coll.\ denotes the AUROC for $\mathrm{collision\_any}$.}
\label{tab:association-main}
\setlength{\tabcolsep}{8pt}
\renewcommand{\arraystretch}{1.15}
\begin{tabular}{l |cc |cc |cc}
\toprule
& \multicolumn{2}{c|}{BridgeAD}
& \multicolumn{2}{c|}{UniAD}
& \multicolumn{2}{c}{GenAD} \\
\cmidrule(lr){2-3}\cmidrule(lr){4-5}\cmidrule(lr){6-7}
Feature
  & $\rho_{\text{ADE}}\!\uparrow$ & coll.\ $\!\uparrow$
  & $\rho_{\text{ADE}}\!\uparrow$ & coll.\ $\!\uparrow$
  & $\rho_{\text{ADE}}\!\uparrow$ & coll.\ $\!\uparrow$ \\
\midrule
\multicolumn{7}{l}{\emph{Attribution statistics}} \\
\quad $H$
  & 0.285 & 0.721
  & 0.219 & 0.691
  & 0.035 & 0.606 \\
\quad $\sigma^2_{\mathrm{sp}}$
  & 0.093 & 0.615
  & 0.197 & 0.678
  & 0.298 & 0.718 \\
\quad $\mathrm{Gini}_{\mathrm{cam}}$
  & 0.144 & 0.647
  & 0.128 & 0.628
  & 0.048 & 0.602 \\
\addlinespace[2pt]
\multicolumn{7}{l}{\emph{Object-configuration controls}} \\
\quad $N_{\mathrm{obj}}$
  & 0.109 & 0.609
  & 0.031 & 0.552
  & 0.223 & 0.624 \\
\quad $D_{\mathrm{obj}}$
  & 0.002 & 0.513
  & 0.032 & 0.544
  & 0.043 & 0.530 \\
\quad $\mathrm{Gini}_{\mathrm{obj}}$
  & 0.039 & 0.515
  & 0.003 & 0.505
  & 0.016 & 0.507 \\
\addlinespace[2pt]
\multicolumn{7}{l}{\emph{Joint models}} \\
\quad Controls only
  & 0.137 & 0.611
  & 0.034 & 0.552
  & 0.231 & 0.635 \\
\quad Attribution stats only
  & 0.310 & 0.768
  & 0.299 & 0.770
  & 0.307 & 0.765 \\
\quad Controls $+$ Attribution
  & 0.369 & 0.792
  & 0.308 & 0.773
  & 0.350 & 0.795 \\
\bottomrule
\end{tabular}
\end{table}


\textbf{Different planners route risk through different statistics.} The dominant univariate statistic varies across planners: attribution entropy $H$ leads on BridgeAD and UniAD, within-camera spatial variance $\sigma^2_{\mathrm{sp}}$ leads on GenAD, and the cross-camera Gini coefficient $\mathrm{Gini}_{\mathrm{cam}}$ is consistently weaker but remains sign-aligned with risk. Yet the joint predictive strength is nearly identical across the three planners, with $\rho_{\text{ADE}}$ of $0.310$ / $0.299$ / $0.307$ and AUROC of $0.768$ / $0.770$ / $0.765$ for BridgeAD / UniAD / GenAD, respectively. Which attribution axis carries the strongest signal is architecture-dependent; that such a signal exists is not.


\textbf{The signal is not a stand-in for object-layout complexity.} If attribution were merely re-encoding scene-object configuration, the controls block would already explain most of the variance. It does not. Collision AUROC under controls is at or near chance, ranging from $0.55$ to $0.64$ across the three planners, and $\rho_{\text{ADE}}$ is low on BridgeAD and UniAD and reaches only $0.231$ on GenAD. Attribution statistics raise both metrics far above this on every planner. Adding controls on top of attribution gives a small further gain, so the two sources of information are not redundant, but object configuration is not what drives the attribution signal. The same conclusion holds against ten additional scene-level controls covering ego motion, object kinematics, and perception support, as shown in Appendix~\ref{app:extended-controls}.

\subsection{Generalization to Held-Out Scenes}
\label{sec:exp-out-of-scene}

The previous section fits and evaluates on the same pool of scenes. We now ask whether the relationship transfers to scenes the model has never seen. We split scenes $80\%/20\%$, fit each model on the training scenes, evaluate on the held-out scenes, and average over 20 random splits. Table~\ref{tab:association-ood} reports cross-scene predictive strength for the three joint models. We then ask a sharper question: among the held-out samples, can the predicted score actually pick out the worst ones? To do so, we define the high-risk class as the ADE-top-$10\%$ within each held-out split. Per-feature numbers and confidence intervals are deferred to Appendix~\ref{app:ci-association}.

\begin{table}[t]
\centering
\footnotesize
\caption{Cross-scene predictive strength of the three joint models under an $80\%/20\%$ scene split, averaged over 20 random splits.}
\label{tab:association-ood}
\setlength{\tabcolsep}{6pt}
\renewcommand{\arraystretch}{1.15}
\begin{tabular}{l|cc |cc |cc}
\toprule
& \multicolumn{2}{c}{BridgeAD}
& \multicolumn{2}{c}{UniAD}
& \multicolumn{2}{c}{GenAD} \\
\cmidrule(lr){2-3}\cmidrule(lr){4-5}\cmidrule(lr){6-7}
Models
  & $\rho_{\text{ADE}}\!\uparrow$ & coll.\ $\!\uparrow$
  & $\rho_{\text{ADE}}\!\uparrow$ & coll.\ $\!\uparrow$
  & $\rho_{\text{ADE}}\!\uparrow$ & coll.\ $\!\uparrow$ \\
\midrule
Controls only
  & 0.143 & 0.605
  & 0.058 & 0.552
  & 0.162 & 0.599 \\
Attribution stats only
  & 0.298 & 0.763
  & 0.322 & 0.782
  & 0.343 & 0.779 \\
Controls $+$ Attribution
  & 0.350 & 0.800
  & 0.307 & 0.787
  & 0.345 & 0.799 \\
\bottomrule
\end{tabular}
\end{table}

\textbf{The held-out signal does not collapse.}
Joint attribution statistics on held-out scenes give $\rho_{\text{ADE}}$ of $0.298$ / $0.322$ / $0.343$ for BridgeAD / UniAD / GenAD and AUROC of $0.763$ / $0.782$ / $0.779$, close to the in-domain numbers in Table~\ref{tab:association-main}. The controls-only model, by contrast, drops to $\rho_{\text{ADE}}$ under $0.20$ and AUROC near chance once scenes change. The signal travels with attribution structure, not with an object layout.

\textbf{The score isolates high-ADE samples.}
As shown in Table~\ref{tab:risk-triage}, at $k=10\%$, the budget matches the base rate, so Recall and Precision coincide, both reach $0.40$ to $0.42$ across the three planners, four times the random baseline. Doubling the budget to $20\%$ recovers $0.61$ to $0.64$ of the high-risk samples, tightening it to $5\%$ keeps precision at roughly the same multiple of random. 

\begin{table}[!t]
\centering
\footnotesize
\setlength{\tabcolsep}{6pt}
\renewcommand{\arraystretch}{1.1}
\caption{High-risk identification on held-out scenes at three budget levels. Random is the analytic uniform-sampling baseline.}
\label{tab:risk-triage}
\begin{tabular}{l |cc |cc |cc} 
\toprule
& \multicolumn{2}{c|}{$k = 5\%$}
& \multicolumn{2}{c|}{$k = 10\%$}
& \multicolumn{2}{c}{$k = 20\%$} \\
\cmidrule(lr){2-3}\cmidrule(lr){4-5}\cmidrule(lr){6-7}
Planners
& Recall\,$\uparrow$ & Prec.\,$\uparrow$
& Recall\,$\uparrow$ & Prec.\,$\uparrow$
& Recall\,$\uparrow$ & Prec.\,$\uparrow$ \\
\midrule
BridgeAD & 0.17 & 0.34 & 0.42 & 0.42 & 0.62 & 0.31 \\
UniAD    & 0.22 & 0.44 & 0.40 & 0.40 & 0.61 & 0.31 \\
GenAD    & 0.20 & 0.40 & 0.40 & 0.40 & 0.64 & 0.32 \\
\midrule
Random   & 0.05 & 0.10 & 0.10 & 0.10 & 0.20 & 0.10 \\
\bottomrule
\end{tabular}
\end{table}
A finer-grained quantile analysis in Appendix~\ref{app:quantile-analysis} reaches the same conclusion.

\subsection{Robustness Across Attribution Methods}
\label{sec:exp-robustness}

A natural concern is that the results so far depend on our hierarchical search itself. To test this, we replace it with RISE~\cite{petsiuk2018rise}, a search-free random-sampling baseline whose mechanism shares nothing with our guided coarse-to-fine search, and recompute the three statistics with everything else fixed. If the diagnostic signal survives the swap, it cannot be an artifact of how attributions are computed.

\begin{table}[!t]
\centering
\footnotesize
\caption{Ablation on attribution maps. The proposed hierarchical attribution is replaced with RISE while keeping the same statistics, metrics, and sign convention.}
\label{tab:robustness-rise}
\setlength{\tabcolsep}{7pt}
\renewcommand{\arraystretch}{1.15}
\definecolor{oursblue}{RGB}{220,235,250}
\begin{tabular}{l|cc|cc|cc}
\toprule
& \multicolumn{2}{c|}{BridgeAD}
& \multicolumn{2}{c|}{UniAD}
& \multicolumn{2}{c}{GenAD} \\
\cmidrule(lr){2-3}\cmidrule(lr){4-5}\cmidrule(lr){6-7}
Feature
  & $\rho_{\text{ADE}}\!\uparrow$ & coll.\ $\!\uparrow$
  & $\rho_{\text{ADE}}\!\uparrow$ & coll.\ $\!\uparrow$
  & $\rho_{\text{ADE}}\!\uparrow$ & coll.\ $\!\uparrow$ \\
\midrule
\multicolumn{7}{l}{\emph{RISE univariate}} \\
\quad $H$
  & 0.194 & 0.625
  & 0.191 & 0.630
  & 0.066 & 0.617 \\
\quad $\sigma^2_{\mathrm{sp}}$
  & 0.134 & 0.544
  & 0.103 & 0.512
  & 0.169 & 0.638 \\
\quad $\mathrm{Gini}_{\mathrm{cam}}$
  & 0.109 & 0.552
  & 0.118 & 0.516
  & 0.246 & 0.682 \\
\addlinespace[2pt]
\multicolumn{7}{l}{\emph{Joint model (Attribution stats only)}} \\
\quad RISE
  & 0.205 & 0.642
  & 0.226 & 0.651
  & 0.292 & 0.728 \\
\rowcolor{oursblue}
\quad Ours 
  & \textbf{0.310} & \textbf{0.768}
  & \textbf{0.299} & \textbf{0.770}
  & \textbf{0.307} & \textbf{0.765} \\
\bottomrule
\end{tabular}
\end{table}


As shown in Table~\ref{tab:robustness-rise}, replacing the proposed attribution method with RISE yields joint-model $\rho_{\text{ADE}}$ values of $0.205$ / $0.226$ / $0.292$ and collision AUROC values of $0.642$ / $0.651$ / $0.728$ for BridgeAD / UniAD / GenAD, respectively. All three statistics keep the same sign relative to risk on every planner, although the dominant statistic can shift, for example from $\sigma^2_{\mathrm{sp}}$ to $\mathrm{Gini}_{\mathrm{cam}}$ on GenAD. These results show that the diagnostic signal survives a different attribution algorithm, while the uniformly weaker RISE performance is consistent with the faithfulness gap analyzed in the next section.

\subsection{Faithfulness and Efficiency of Hierarchical Attribution}
\label{sec:exp-fidelity-efficiency}

We now ask which attribution method to actually use. Hierarchical search is compared against RISE~\cite{petsiuk2018rise} as a low-cost baseline and EAGLE~\cite{chen2025eagle} as a faithfulness upper bound that solves~\eqref{eq:ordered} exactly. All three share the same region partition and masking, on a random subset of $1{,}000$ nuScenes validation samples. Faithfulness uses Insertion / Deletion AUC~\cite{petsiuk2018rise} and $\bar{s}_{\mathrm{high}}$~\cite{chen2025vps}, the average planning score at high-retention Insertion budgets, efficiency is wall-clock time per sample.

\begin{table}[t]
    \centering
    \footnotesize
    \caption{Faithfulness and efficiency across different attribution methods. All methods share the same region partition and masking operation.} 
    \label{tab:fidelity-efficiency}
    \setlength{\tabcolsep}{8pt}
    \renewcommand{\arraystretch}{1.15}
    \definecolor{oursblue}{RGB}{220,235,250}
    \begin{tabular}{l l|ccc|c}
    \toprule
    & & \multicolumn{3}{c|}{Faithfulness} & Efficiency \\
    \cmidrule(lr){3-5}\cmidrule(lr){6-6}
    Planners & Methods
      & Insertion\ $\!\uparrow$
      & Deletion\ $\!\downarrow$
      & $\bar{s}_{\mathrm{high}}\!\uparrow$
      & Time (s) $\!\downarrow$ \\
    \midrule
    \multirow{3}{*}{BridgeAD}
      & RISE   & 0.652 & 0.424 & 0.609 & 320.0 \\
      & EAGLE  & \textbf{0.813} & \textbf{0.151} & \textbf{0.845} & 1289.4 \\
    \rowcolor{oursblue}
      & Ours   & 0.769 & 0.210 & 0.775 & \textbf{267.6} \\
    \midrule
    \multirow{3}{*}{UniAD}
      & RISE   & 0.735 & 0.340 & 0.688 & 1539.0 \\
      & EAGLE  & \textbf{0.876} & \textbf{0.153} & \textbf{0.900} & 6360.6 \\
    \rowcolor{oursblue}
      & Ours   & 0.776 & 0.154 & 0.753 & \textbf{1269.8} \\
    \midrule
    \multirow{3}{*}{GenAD}
      & RISE   & 0.594 & 0.252 & 0.483 & 203.4 \\
      & EAGLE  & \textbf{0.836} & \textbf{0.082} & \textbf{0.834} & 658.4 \\
    \rowcolor{oursblue}
      & Ours   & 0.778 & 0.109 & 0.729 & \textbf{173.6} \\
    \bottomrule
    \end{tabular}
\end{table}

\textbf{Faithfulness sits between RISE and EAGLE, much closer to EAGLE.}
On every planner and every faithfulness metric, our method clears RISE by a wide margin and trails EAGLE by a narrow one. On UniAD, Deletion is essentially tied with EAGLE. On the other two planners it lands between the baselines but closer to EAGLE. $\bar{s}_{\mathrm{high}}$ shows the same pattern: hierarchical search recovers not just the final selected set but the order in which critical evidence enters the explanation. The residual gap to EAGLE is the price of giving up exact greedy search.

\textbf{Efficiency below the random-sampling baseline.}
Our method is the fastest on all three planners, taking roughly a fifth of EAGLE's wall-clock time, and slightly faster than RISE, despite RISE being a non-search baseline. Together with the faithfulness numbers above, hierarchical search recovers most of what full greedy search recovers at a small fraction of the computation, which is what makes large-scale attribution statistics tractable.

\section{Conclusion and Discussion}
\label{sec:conclusion}
 

In this paper, we study whether attribution can serve as a predictive signal for planning risk in end-to-end autonomous driving. We propose a hierarchical multi-view attribution framework that treats six camera views as a unified attribution space and uses L2 trajectory consistency to explain continuous planning outputs. Based on the resulting attribution maps, we derive three statistics to characterize global, within-view, and cross-view over-reliance. Experiments on nuScenes with BridgeAD, UniAD, and GenAD show that these attribution-based signals correlate with trajectory error, predict collision risk, and generalize to unseen samples. These results suggest that attribution patterns can provide useful warning signals for decision-level risk in end-to-end driving planners.

\textbf{Limitation and future work.}
The main limitation of our framework is attribution efficiency. Although the proposed hierarchical search improves over exhaustive subset selection, it still requires multiple planner forward passes and cannot yet support real-time risk monitoring. However, the results show that high-fidelity attribution provides effective signals for detecting risky planning decisions. Future work can explore real-time high-fidelity attribution, especially gradient-based or hybrid methods, to better balance computational efficiency and signal reliability for online planning-risk monitoring.

\small{
    \bibliographystyle{unsrt}
	\bibliography{bib/references}}

\clearpage
\appendix

\section{Grouped-Domain Property of the Coarse Stage}
\label{sec:appendix-coarse-stage}

This appendix gives the grouped-domain objective associated with the coarse stage and its basic properties. Notation follows Section~\ref{sec:method} unless stated otherwise.

Let $V$ denote the atomic candidate set, and let
\[
  \mathcal{G}=\{G_1,\dots,G_m\},
  \qquad
  V=\bigsqcup_{r=1}^{m} G_r
\]
be a partition of $V$ into groups. For any $\mathcal{P}\subseteq \mathcal{G}$, define the grouped-domain objective
\[
  F_{\mathrm{grp}}(\mathcal{P})
  =
  F\!\Bigl(\bigcup_{G\in\mathcal{P}} G\Bigr).
\]

\begin{proposition}
\label{prop:coarse-stage-grouped-submodularity}
If $F:2^V\to\mathbb{R}$ is normalized, monotone, and submodular, then $F_{\mathrm{grp}}:2^{\mathcal{G}}\to\mathbb{R}$ is also normalized, monotone, and submodular. Consequently, for any group budget $b\le |\mathcal{G}|$, the solution $\mathcal{P}_b^{\mathrm{greedy}}$ obtained by greedy maximization of $F_{\mathrm{grp}}$ on the grouped domain satisfies
\[
  F_{\mathrm{grp}}(\mathcal{P}_b^{\mathrm{greedy}})
  \ge
  \left(1-\frac{1}{e}\right)
  \max_{\substack{\mathcal{P}\subseteq \mathcal{G}\\ |\mathcal{P}|\le b}}
  F_{\mathrm{grp}}(\mathcal{P}).
\]
\end{proposition}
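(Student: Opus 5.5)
The plan is to work through the union map $\phi:2^{\mathcal{G}}\to 2^V$, $\phi(\mathcal{P})=\bigcup_{G\in\mathcal{P}}G$, so that $F_{\mathrm{grp}}=F\circ\phi$. I will first record three elementary properties of $\phi$, all of which follow from the groups being pairwise disjoint and covering $V$:
\begin{enumerate}
\item $\phi(\emptyset)=\emptyset$;
\item $\mathcal{P}\subseteq\mathcal{Q}$ implies $\phi(\mathcal{P})\subseteq\phi(\mathcal{Q})$;
\item $\phi(\mathcal{P}\cup\mathcal{Q})=\phi(\mathcal{P})\cup\phi(\mathcal{Q})$.
\end{enumerate}
In fact only (iii) is needed for submodularity, and it holds without disjointness. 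Disjointness additionally gives $\phi(\mathcal{P}\cap\mathcal{Q})=\phi(\mathcal{P})\cap\phi(\mathcal{Q})$, which I will use as a convenience.

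Normalization and monotonicity then transfer directly. We have $F_{\mathrm{grp}}(\emptyset)=F(\emptyset)=0$ by (i). For $\mathcal{P}\subseteq\mathcal{Q}$, property (ii) and monotonicity of $F$ give $F_{\mathrm{grp}}(\mathcal{P})\le F_{\mathrm{grp}}(\mathcal{Q})$.

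For submodularity I will use the diminishing-returns form. Take $\mathcal{P}\subseteq\mathcal{Q}\subseteq\mathcal{G}$ and a group $G\notin\mathcal{Q}$, and set $A=\phi(\mathcal{P})\subseteq B=\phi(\mathcal{Q})$. By (iii),
\[
F_{\mathrm{grp}}(\mathcal{P}\cup\{G\})-F_{\mathrm{grp}}(\mathcal{P})=F(A\cup G)-F(A),
\]
and the same holds with $B$ in place of $A$. Adding a whole set $G$ rather than a single element still satisfies diminishing returns. Since $B\setminus A$ need not contain $G$, the cleanest route is to apply the lattice inequality
\[
F(X)+F(Y)\ge F(X\cup Y)+F(X\cap Y)
\]
with $X=A\cup G$ and $Y=B$. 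Then $X\cup Y=B\cup G$, and $X\cap Y=A\cup(G\cap B)\supseteq A$. If $G$ is disjoint from $B$, as it is under the partition assumption, then $X\cap Y=A$ exactly. Otherwise monotonicity gives $F(X\cap Y)\ge F(A)$. In either case, rearranging yields
\[
F(A\cup G)-F(A)\ge F(B\cup G)-F(B),
\]
which is the required diminishing-returns inequality for $F_{\mathrm{grp}}$. This set-level step is where care is needed, and it is the main obstacle of the proof. It is handled by the lattice form together with monotonicity rather than by element-wise telescoping, though telescoping over the elements of $G$ would also work.

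Finally, with $F_{\mathrm{grp}}$ shown to be normalized, monotone and submodular on the finite ground set $\mathcal{G}$, the approximation guarantee follows by invoking the classical result of \citet{nemhauser1978analysis}. Greedy maximization under a cardinality constraint $b$ achieves at least a $(1-1/e)$ fraction of the constrained optimum. No further argument is required beyond checking that the coarse-stage greedy procedure is exactly that greedy algorithm applied to $F_{\mathrm{grp}}$.
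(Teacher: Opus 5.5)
Your proposal is correct and follows the paper's proof essentially step for step: normalization and monotonicity transfer through the union map, submodularity comes from the diminishing-returns inequality $F(A\cup G)-F(A)\ge F(B\cup G)-F(B)$ for $A=\bigcup_{G'\in\mathcal{P}}G'\subseteq B=\bigcup_{G'\in\mathcal{Q}}G'$, and the $(1-1/e)$ bound is then quoted from Nemhauser et al. The only difference is that the paper applies diminishing returns to a whole group at once without comment. Your lattice-inequality argument, using $G\cap B=\varnothing$ from the partition or monotonicity otherwise, actually justifies that set-level step, so it tightens the same proof rather than taking a different route.
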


\begin{proof}
We first show that $F_{\mathrm{grp}}$ inherits the three properties from $F$.

\emph{Normalization.}
\[
  F_{\mathrm{grp}}(\varnothing)
  =
  F\!\Bigl(\bigcup_{G\in\varnothing} G\Bigr)
  =
  F(\varnothing)
  =
  0.
\]

\emph{Monotonicity.} For $\mathcal{P}\subseteq \mathcal{Q}\subseteq \mathcal{G}$,
\[
  \bigcup_{G\in\mathcal{P}} G
  \subseteq
  \bigcup_{G\in\mathcal{Q}} G,
\]
and the monotonicity of $F$ gives
\[
  F_{\mathrm{grp}}(\mathcal{P})
  =
  F\!\Bigl(\bigcup_{G\in\mathcal{P}} G\Bigr)
  \le
  F\!\Bigl(\bigcup_{G\in\mathcal{Q}} G\Bigr)
  =
  F_{\mathrm{grp}}(\mathcal{Q}).
\]

\emph{Submodularity.} Take any $\mathcal{P}\subseteq \mathcal{Q}\subseteq \mathcal{G}$ and any $H\in \mathcal{G}\setminus \mathcal{Q}$, and write
\[
  X=\bigcup_{G\in\mathcal{P}} G,
  \qquad
  Y=\bigcup_{G\in\mathcal{Q}} G,
\]
so that $X\subseteq Y$. The diminishing-returns property of $F$ gives
\[
  F(X\cup H)-F(X)
  \ge
  F(Y\cup H)-F(Y).
\]
Since
\[
  X\cup H
  =
  \bigcup_{G\in\mathcal{P}\cup\{H\}} G,
  \qquad
  Y\cup H
  =
  \bigcup_{G\in\mathcal{Q}\cup\{H\}} G,
\]
this is equivalent to
\[
  F_{\mathrm{grp}}(\mathcal{P}\cup\{H\})-F_{\mathrm{grp}}(\mathcal{P})
  \ge
  F_{\mathrm{grp}}(\mathcal{Q}\cup\{H\})-F_{\mathrm{grp}}(\mathcal{Q}),
\]
so $F_{\mathrm{grp}}$ is submodular.

With $F_{\mathrm{grp}}$ normalized, monotone, and submodular, the classical greedy bound of Nemhauser et al.~\cite{nemhauser1978analysis} applies on the grouped domain and yields
\[
  F_{\mathrm{grp}}(\mathcal{P}_b^{\mathrm{greedy}})
  \ge
  \left(1-\frac{1}{e}\right)
  \max_{\substack{\mathcal{P}\subseteq \mathcal{G}\\ |\mathcal{P}|\le b}}
  F_{\mathrm{grp}}(\mathcal{P}).
\]
\end{proof}

\paragraph{Remark.}
The coarse stage does not change the optimization objective; it only contracts the search domain from atomic regions to groups. When the original $F$ is normalized, monotone, and submodular, the contracted grouped-domain objective inherits these properties, so coarse-stage greedy search retains the classical $(1-1/e)$ approximation guarantee for the corresponding best group subset.

\section{Object-Configuration Control Variables}
\label{sec:appendix-controls}

Section~\ref{sec:exp-setup} introduces three object-configuration control variables, $N_{\mathrm{obj}}$, $D_{\mathrm{obj}}$, and $\mathrm{Gini}_{\mathrm{obj}}$, used to test whether the association between attribution statistics and planning risk is driven by scene composition rather than by the planner's visual-dependency pattern. We give their formal definitions here. Each control is constructed to be a structural counterpart of one attribution statistic in Section~\ref{sec:statistics}: $N_{\mathrm{obj}}$ for the joint-pixel concentration captured by $H$, $D_{\mathrm{obj}}$ for the within-camera spatial spread captured by $\sigma^2_{\mathrm{sp}}$, and $\mathrm{Gini}_{\mathrm{obj}}$ for the cross-camera imbalance captured by $\mathrm{Gini}_{\mathrm{cam}}$.

\paragraph{Setup.}
For each nuScenes validation sample we use all annotated 3D bounding boxes (dynamic and static categories: vehicles, pedestrians, cyclists, traffic obstacles, construction objects, etc.). For an object $k$ with 3D box $\mathcal{B}^{(k)}$ in the ego frame, we project the eight box corners into each of the $C=6$ camera image planes using the provided intrinsics and extrinsics, and define the per-camera 2D footprint $b_k^{(c)}$ as the axis-aligned bounding box of the projected corners that fall in front of the camera. An object is considered visible in camera $c$ if $b_k^{(c)}$ overlaps the $H \times W$ image rectangle by a non-empty area. Partial occlusion is not filtered out: an object whose 2D footprint intersects the image rectangle is counted, regardless of whether other objects sit in front of it.

\paragraph{Object count $N_{\mathrm{obj}}$.}
We count each object once if it is visible in at least one camera:
\begin{equation}
  N_{\mathrm{obj}}
  =
  \bigl|\bigl\{k : \exists\, c \text{ s.t.\ } b_k^{(c)} \cap \mathcal{I}^{(c)} \neq \varnothing \bigr\}\bigr|,
  \label{eq:control-nobj}
\end{equation}
where $\mathcal{I}^{(c)}$ is the image rectangle of camera $c$. As a coarse measure of scene complexity, $N_{\mathrm{obj}}$ plays a role analogous to that of $H$: more objects spread the planner's potential evidence over more candidate locations, just as larger $H$ corresponds to attribution being spread over more pixel locations.

\paragraph{Within-camera spatial spread $D_{\mathrm{obj}}$.}
For each camera $c$, let $\bar{\mathbf{r}}^{(c)}_{\mathrm{obj}}$ denote the mean of the 2D centers of all object footprints visible in that camera, and define the within-camera spatial second moment as the average squared distance from these centers to the mean. We then aggregate across cameras with weights proportional to the per-camera object count $n_c = |\{k : b_k^{(c)}\cap \mathcal{I}^{(c)}\neq \varnothing\}|$:
\begin{equation}
  D_{\mathrm{obj}}
  =
  \sum_{c}
  \frac{n_c}{\sum_{c'}n_{c'}}
  \cdot
  \frac{1}{n_c}
  \sum_{k : b_k^{(c)}\cap \mathcal{I}^{(c)}\neq \varnothing}
  \bigl\|\mathbf{r}_k^{(c)} - \bar{\mathbf{r}}^{(c)}_{\mathrm{obj}}\bigr\|^2,
  \label{eq:control-dobj}
\end{equation}
where $\mathbf{r}_k^{(c)}$ is the 2D center of the footprint $b_k^{(c)}$ in the image plane of camera $c$. Cameras with $n_c=0$ contribute zero. Equation~\eqref{eq:control-dobj} mirrors the within-camera spatial variance $\sigma^2_{\mathrm{sp}}$ in equation~\eqref{eq:stats-spatial-variance}: $\sigma^2_{\mathrm{sp}}$ measures the spread of attribution mass around its centroid in each camera, and $D_{\mathrm{obj}}$ measures the spread of object centers around their centroid in the same image plane, with the same camera-mass aggregation scheme.

\paragraph{Cross-camera imbalance $\mathrm{Gini}_{\mathrm{obj}}$.}
With $n_c$ defined as above, we summarize how object visibility is allocated across the $C$ cameras with the Gini coefficient of $\{n_1,\dots,n_C\}$:
\begin{equation}
  \mathrm{Gini}_{\mathrm{obj}}
  =
  \frac{\sum_{c}\sum_{c'}|n_c-n_{c'}|}
       {2C\sum_{c}n_c}.
  \label{eq:control-gini-obj}
\end{equation}
Equation~\eqref{eq:control-gini-obj} mirrors $\mathrm{Gini}_{\mathrm{cam}}$ in equation~\eqref{eq:stats-camera-gini}: $\mathrm{Gini}_{\mathrm{cam}}$ measures cross-camera imbalance of attribution mass $\{m_c\}$, while $\mathrm{Gini}_{\mathrm{obj}}$ measures cross-camera imbalance of object counts $\{n_c\}$. Higher values of either indicate that activity (attribution mass or object count) concentrates in a small subset of views.

The pairing between each control and its corresponding attribution statistic is by construction: $D_{\mathrm{obj}}$ uses the same camera-mass-weighted spatial-variance template as $\sigma^2_{\mathrm{sp}}$, and $\mathrm{Gini}_{\mathrm{obj}}$ uses the same Gini formula as $\mathrm{Gini}_{\mathrm{cam}}$. This makes the controls maximally favorable to the null hypothesis that attribution structure merely re-encodes object-layout structure: each attribution statistic is benchmarked against a control that targets the same dimension of scene composition.

\section{Confidence Intervals for Main-Text Tables}
\label{app:ci-main}

This appendix reproduces every result reported in the main text together with the $95\%$ scene-clustered bootstrap confidence intervals that were deferred for space. Numbers in parentheses are half-widths of the bootstrap interval, computed over 100 resamples (or 20 random splits, where indicated). Point estimates match those in the corresponding main-text tables.

\subsection{In-Domain Association (Table~\ref{tab:association-main})}
\label{app:ci-association}

Table~\ref{tab:ci-association-main} gives the full version of main-text Table~\ref{tab:association-main}, including the six per-feature univariate rows and all confidence intervals.

\begin{table}[h]
\centering
\footnotesize
\caption{In-domain association with confidence intervals. Same metrics, sign convention, and bold rule as Table~\ref{tab:association-main}. Numbers in parentheses are bootstrap half-widths.}
\label{tab:ci-association-main}
\setlength{\tabcolsep}{4pt}
\renewcommand{\arraystretch}{1.15}
\resizebox{\columnwidth}{!}{%
\begin{tabular}{l cc cc cc}
\toprule
& \multicolumn{2}{c}{BridgeAD}
& \multicolumn{2}{c}{UniAD}
& \multicolumn{2}{c}{GenAD} \\
\cmidrule(lr){2-3}\cmidrule(lr){4-5}\cmidrule(lr){6-7}
Feature
  & $\rho_{\text{ADE}}\!\uparrow$ & coll.\ $\!\uparrow$
  & $\rho_{\text{ADE}}\!\uparrow$ & coll.\ $\!\uparrow$
  & $\rho_{\text{ADE}}\!\uparrow$ & coll.\ $\!\uparrow$ \\
\midrule
\multicolumn{7}{l}{\emph{Attribution statistics}} \\
\quad $H$
  & $0.285{\scriptstyle\,(\pm 0.046)}$ & $0.721{\scriptstyle\,(\pm 0.030)}$
  & $0.219{\scriptstyle\,(\pm 0.056)}$ & $0.691{\scriptstyle\,(\pm 0.040)}$
  & $0.035{\scriptstyle\,(\pm 0.041)}$ & $0.606{\scriptstyle\,(\pm 0.031)}$ \\
\quad $\sigma^2_{\mathrm{sp}}$
  & $0.093{\scriptstyle\,(\pm 0.049)}$ & $0.615{\scriptstyle\,(\pm 0.033)}$
  & $0.197{\scriptstyle\,(\pm 0.076)}$ & $0.678{\scriptstyle\,(\pm 0.044)}$
  & $0.298{\scriptstyle\,(\pm 0.053)}$ & $0.718{\scriptstyle\,(\pm 0.030)}$ \\
\quad $\mathrm{Gini}_{\mathrm{cam}}$
  & $0.144{\scriptstyle\,(\pm 0.059)}$ & $0.647{\scriptstyle\,(\pm 0.034)}$
  & $0.128{\scriptstyle\,(\pm 0.053)}$ & $0.628{\scriptstyle\,(\pm 0.037)}$
  & $0.048{\scriptstyle\,(\pm 0.057)}$ & $0.602{\scriptstyle\,(\pm 0.036)}$ \\
\addlinespace[2pt]
\multicolumn{7}{l}{\emph{Object-configuration controls}} \\
\quad $N_{\mathrm{obj}}$
  & $0.109{\scriptstyle\,(\pm 0.050)}$ & $0.609{\scriptstyle\,(\pm 0.023)}$
  & $0.031{\scriptstyle\,(\pm 0.051)}$ & $0.552{\scriptstyle\,(\pm 0.034)}$
  & $0.223{\scriptstyle\,(\pm 0.082)}$ & $0.624{\scriptstyle\,(\pm 0.114)}$ \\
\quad $D_{\mathrm{obj}}$
  & $0.002{\scriptstyle\,(\pm 0.040)}$ & $0.513{\scriptstyle\,(\pm 0.023)}$
  & $0.032{\scriptstyle\,(\pm 0.046)}$ & $0.544{\scriptstyle\,(\pm 0.029)}$
  & $0.043{\scriptstyle\,(\pm 0.091)}$ & $0.530{\scriptstyle\,(\pm 0.102)}$ \\
\quad $\mathrm{Gini}_{\mathrm{obj}}$
  & $0.039{\scriptstyle\,(\pm 0.048)}$ & $0.515{\scriptstyle\,(\pm 0.020)}$
  & $0.003{\scriptstyle\,(\pm 0.030)}$ & $0.505{\scriptstyle\,(\pm 0.015)}$
  & $0.016{\scriptstyle\,(\pm 0.106)}$ & $0.507{\scriptstyle\,(\pm 0.053)}$ \\
\addlinespace[2pt]
\multicolumn{7}{l}{\emph{Joint models}} \\
\quad Controls only
  & $0.137{\scriptstyle\,(\pm 0.056)}$ & $0.611{\scriptstyle\,(\pm 0.027)}$
  & $0.034{\scriptstyle\,(\pm 0.053)}$ & $0.552{\scriptstyle\,(\pm 0.028)}$
  & $0.231{\scriptstyle\,(\pm 0.192)}$ & $0.635{\scriptstyle\,(\pm 0.103)}$ \\
\quad Attribution stats only
  & $0.310{\scriptstyle\,(\pm 0.054)}$ & $0.768{\scriptstyle\,(\pm 0.036)}$
  & $0.299{\scriptstyle\,(\pm 0.071)}$ & $0.770{\scriptstyle\,(\pm 0.040)}$
  & $0.307{\scriptstyle\,(\pm 0.070)}$ & $0.765{\scriptstyle\,(\pm 0.033)}$ \\
\quad Controls $+$ Attribution
  & $0.369{\scriptstyle\,(\pm 0.070)}$ & $0.792{\scriptstyle\,(\pm 0.044)}$
  & $0.308{\scriptstyle\,(\pm 0.062)}$ & $0.773{\scriptstyle\,(\pm 0.037)}$
  & $0.350{\scriptstyle\,(\pm 0.077)}$ & $0.795{\scriptstyle\,(\pm 0.034)}$ \\
\bottomrule
\end{tabular}%
}
\end{table}

\subsection{Held-Out Association (Table~\ref{tab:association-ood})}
\label{app:ci-association-ood}

The main-text version of Table~\ref{tab:association-ood} reports only the three joint models. Table~\ref{tab:ci-association-ood} adds the six per-feature univariate rows and confidence intervals from 20 random $80\%/20\%$ scene splits. Per-feature ordering is preserved across the in-domain and held-out tables, and the joint-model trends discussed in Section~\ref{sec:exp-out-of-scene} carry over.

\begin{table}[h]
\centering
\footnotesize
\caption{Held-out association with confidence intervals. Numbers in parentheses are half-widths of the $95\%$ bootstrap interval over 20 random scene splits.}
\label{tab:ci-association-ood}
\setlength{\tabcolsep}{4pt}
\renewcommand{\arraystretch}{1.15}
\resizebox{\columnwidth}{!}{%
\begin{tabular}{l cc cc cc}
\toprule
& \multicolumn{2}{c}{BridgeAD}
& \multicolumn{2}{c}{UniAD}
& \multicolumn{2}{c}{GenAD} \\
\cmidrule(lr){2-3}\cmidrule(lr){4-5}\cmidrule(lr){6-7}
Feature
  & $\rho_{\text{ADE}}\!\uparrow$ & coll.\ $\!\uparrow$
  & $\rho_{\text{ADE}}\!\uparrow$ & coll.\ $\!\uparrow$
  & $\rho_{\text{ADE}}\!\uparrow$ & coll.\ $\!\uparrow$ \\
\midrule
\multicolumn{7}{l}{\emph{Attribution statistics}} \\
\quad $H$
  & $0.270{\scriptstyle\,(\pm 0.093)}$ & $0.718{\scriptstyle\,(\pm 0.062)}$
  & $0.204{\scriptstyle\,(\pm 0.106)}$ & $0.692{\scriptstyle\,(\pm 0.071)}$
  & $0.060{\scriptstyle\,(\pm 0.058)}$ & $0.618{\scriptstyle\,(\pm 0.051)}$ \\
\quad $\sigma^2_{\mathrm{sp}}$
  & $0.080{\scriptstyle\,(\pm 0.093)}$ & $0.614{\scriptstyle\,(\pm 0.061)}$
  & $0.220{\scriptstyle\,(\pm 0.130)}$ & $0.692{\scriptstyle\,(\pm 0.093)}$
  & $0.318{\scriptstyle\,(\pm 0.111)}$ & $0.725{\scriptstyle\,(\pm 0.070)}$ \\
\quad $\mathrm{Gini}_{\mathrm{cam}}$
  & $0.148{\scriptstyle\,(\pm 0.129)}$ & $0.652{\scriptstyle\,(\pm 0.062)}$
  & $0.135{\scriptstyle\,(\pm 0.116)}$ & $0.646{\scriptstyle\,(\pm 0.069)}$
  & $0.072{\scriptstyle\,(\pm 0.068)}$ & $0.615{\scriptstyle\,(\pm 0.047)}$ \\
\addlinespace[2pt]
\multicolumn{7}{l}{\emph{Object-configuration controls}} \\
\quad $N_{\mathrm{obj}}$
  & $0.113{\scriptstyle\,(\pm 0.097)}$ & $0.572{\scriptstyle\,(\pm 0.051)}$
  & $0.074{\scriptstyle\,(\pm 0.090)}$ & $0.566{\scriptstyle\,(\pm 0.066)}$
  & $0.156{\scriptstyle\,(\pm 0.095)}$ & $0.586{\scriptstyle\,(\pm 0.060)}$ \\
\quad $D_{\mathrm{obj}}$
  & $0.062{\scriptstyle\,(\pm 0.086)}$ & $0.533{\scriptstyle\,(\pm 0.044)}$
  & $0.080{\scriptstyle\,(\pm 0.089)}$ & $0.559{\scriptstyle\,(\pm 0.052)}$
  & $0.055{\scriptstyle\,(\pm 0.057)}$ & $0.522{\scriptstyle\,(\pm 0.069)}$ \\
\quad $\mathrm{Gini}_{\mathrm{obj}}$
  & $0.057{\scriptstyle\,(\pm 0.078)}$ & $0.526{\scriptstyle\,(\pm 0.034)}$
  & $0.050{\scriptstyle\,(\pm 0.049)}$ & $0.526{\scriptstyle\,(\pm 0.036)}$
  & $0.041{\scriptstyle\,(\pm 0.106)}$ & $0.497{\scriptstyle\,(\pm 0.080)}$ \\
\addlinespace[2pt]
\multicolumn{7}{l}{\emph{Joint models}} \\
\quad Controls only
  & $0.143{\scriptstyle\,(\pm 0.048)}$ & $0.605{\scriptstyle\,(\pm 0.051)}$
  & $0.058{\scriptstyle\,(\pm 0.073)}$ & $0.552{\scriptstyle\,(\pm 0.051)}$
  & $0.162{\scriptstyle\,(\pm 0.022)}$ & $0.599{\scriptstyle\,(\pm 0.092)}$ \\
\quad Attribution stats only
  & $0.298{\scriptstyle\,(\pm 0.088)}$ & $0.763{\scriptstyle\,(\pm 0.060)}$
  & $0.322{\scriptstyle\,(\pm 0.102)}$ & $0.782{\scriptstyle\,(\pm 0.091)}$
  & $0.343{\scriptstyle\,(\pm 0.102)}$ & $0.779{\scriptstyle\,(\pm 0.073)}$ \\
\quad Controls $+$ Attribution
  & $0.350{\scriptstyle\,(\pm 0.181)}$ & $0.800{\scriptstyle\,(\pm 0.090)}$
  & $0.307{\scriptstyle\,(\pm 0.167)}$ & $0.787{\scriptstyle\,(\pm 0.073)}$
  & $0.345{\scriptstyle\,(\pm 0.134)}$ & $0.799{\scriptstyle\,(\pm 0.060)}$ \\
\bottomrule
\end{tabular}%
}
\end{table}

\subsection{Held-Out Risk Triage (Table inset in Section~\ref{sec:exp-out-of-scene})}
\label{app:ci-triage}

Table~\ref{tab:ci-triage} adds confidence intervals to the inline risk-triage table from the main text. Half-widths are computed over the 20 random scene splits.

\begin{table}[h]
\centering
\footnotesize
\caption{Held-out high-ADE identification with confidence intervals. Same setup and reading direction as the inline table in Section~\ref{sec:exp-out-of-scene}. Random is the analytic uniform-sampling expectation and has no sampling variance.}
\label{tab:ci-triage}
\setlength{\tabcolsep}{4pt}
\renewcommand{\arraystretch}{1.15}
\resizebox{\columnwidth}{!}{%
\begin{tabular}{l cc cc cc}
\toprule
& \multicolumn{2}{c}{$k = 5\%$}
& \multicolumn{2}{c}{$k = 10\%$}
& \multicolumn{2}{c}{$k = 20\%$} \\
\cmidrule(lr){2-3}\cmidrule(lr){4-5}\cmidrule(lr){6-7}
Planner
   & Recall $\uparrow$ & Prec. $\uparrow$
  & Recall $\uparrow$ & Prec. $\uparrow$
  & Recall $\uparrow$ & Prec. $\uparrow$ \\
\midrule
BridgeAD
  & $0.17{\scriptstyle\,(\pm 0.13)}$ & $0.34{\scriptstyle\,(\pm 0.26)}$
  & $0.42{\scriptstyle\,(\pm 0.12)}$ & $0.42{\scriptstyle\,(\pm 0.12)}$
  & $0.62{\scriptstyle\,(\pm 0.10)}$ & $0.31{\scriptstyle\,(\pm 0.05)}$ \\
UniAD
  & $0.22{\scriptstyle\,(\pm 0.07)}$ & $0.44{\scriptstyle\,(\pm 0.14)}$
  & $0.40{\scriptstyle\,(\pm 0.15)}$ & $0.40{\scriptstyle\,(\pm 0.15)}$
  & $0.61{\scriptstyle\,(\pm 0.06)}$ & $0.31{\scriptstyle\,(\pm 0.03)}$ \\
GenAD
  & $0.20{\scriptstyle\,(\pm 0.11)}$ & $0.40{\scriptstyle\,(\pm 0.22)}$
  & $0.40{\scriptstyle\,(\pm 0.09)}$ & $0.40{\scriptstyle\,(\pm 0.09)}$
  & $0.64{\scriptstyle\,(\pm 0.11)}$ & $0.32{\scriptstyle\,(\pm 0.06)}$ \\
\midrule
Random
  & $0.05$ & $0.10$ & $0.10$ & $0.10$ & $0.20$ & $0.10$ \\
\bottomrule
\end{tabular}%
}
\end{table}

\subsection{RISE Robustness (Table~\ref{tab:robustness-rise})}
\label{app:ci-rise}

Table~\ref{tab:ci-rise} adds confidence intervals to the RISE robustness experiment.

\begin{table}[h]
\centering
\footnotesize
\caption{RISE robustness with confidence intervals. Same metrics and sign convention as Table~\ref{tab:robustness-rise}.}
\label{tab:ci-rise}
\setlength{\tabcolsep}{4pt}
\renewcommand{\arraystretch}{1.15}
\resizebox{\columnwidth}{!}{%
\begin{tabular}{l cc cc cc}
\toprule
& \multicolumn{2}{c}{BridgeAD}
& \multicolumn{2}{c}{UniAD}
& \multicolumn{2}{c}{GenAD} \\
\cmidrule(lr){2-3}\cmidrule(lr){4-5}\cmidrule(lr){6-7}
Feature
  & $\rho_{\text{ADE}}\!\uparrow$ & coll.\ $\!\uparrow$
  & $\rho_{\text{ADE}}\!\uparrow$ & coll.\ $\!\uparrow$
  & $\rho_{\text{ADE}}\!\uparrow$ & coll.\ $\!\uparrow$ \\
\midrule
\multicolumn{7}{l}{\emph{RISE univariate}} \\
\quad $H$
  & $0.194{\scriptstyle\,(\pm 0.084)}$ & $0.625{\scriptstyle\,(\pm 0.058)}$
  & $0.191{\scriptstyle\,(\pm 0.061)}$ & $0.630{\scriptstyle\,(\pm 0.078)}$
  & $0.066{\scriptstyle\,(\pm 0.058)}$ & $0.617{\scriptstyle\,(\pm 0.120)}$ \\
\quad $\sigma^2_{\mathrm{sp}}$
  & $0.134{\scriptstyle\,(\pm 0.153)}$ & $0.544{\scriptstyle\,(\pm 0.083)}$
  & $0.103{\scriptstyle\,(\pm 0.133)}$ & $0.512{\scriptstyle\,(\pm 0.103)}$
  & $0.169{\scriptstyle\,(\pm 0.111)}$ & $0.638{\scriptstyle\,(\pm 0.121)}$ \\
\quad $\mathrm{Gini}_{\mathrm{cam}}$
  & $0.109{\scriptstyle\,(\pm 0.093)}$ & $0.552{\scriptstyle\,(\pm 0.135)}$
  & $0.118{\scriptstyle\,(\pm 0.122)}$ & $0.516{\scriptstyle\,(\pm 0.108)}$
  & $0.246{\scriptstyle\,(\pm 0.068)}$ & $0.682{\scriptstyle\,(\pm 0.095)}$ \\
\addlinespace[2pt]
\multicolumn{7}{l}{\emph{Joint model (Attribution stats only)}} \\
\quad RISE
  & $0.205{\scriptstyle\,(\pm 0.117)}$ & $0.642{\scriptstyle\,(\pm 0.067)}$
  & $0.226{\scriptstyle\,(\pm 0.083)}$ & $0.651{\scriptstyle\,(\pm 0.063)}$
  & $0.292{\scriptstyle\,(\pm 0.102)}$ & $0.728{\scriptstyle\,(\pm 0.109)}$ \\
\bottomrule
\end{tabular}%
}
\end{table}

\subsection{Faithfulness and Efficiency (Table~\ref{tab:fidelity-efficiency})}
\label{app:ci-fidelity}

Table~\ref{tab:ci-fidelity} adds confidence intervals to the faithfulness and efficiency comparison. The Time column reports raw averages without bootstrap; its variation across runs is small and not the subject of the comparison.

\begin{table}[h]
\centering
\footnotesize
\caption{Faithfulness and efficiency with confidence intervals. Same metrics and bold rule as Table~\ref{tab:fidelity-efficiency}.}
\label{tab:ci-fidelity}
\setlength{\tabcolsep}{6pt}
\renewcommand{\arraystretch}{1.15}
\begin{tabular}{l l ccc c}
\toprule
& & \multicolumn{3}{c}{Faithfulness} & Efficiency \\
\cmidrule(lr){3-5}\cmidrule(lr){6-6}
Planner & Method
  & Ins.\ $\!\uparrow$
  & Del.\ $\!\downarrow$
  & $\bar{s}_{\mathrm{high}}\!\uparrow$
  & Time (s) $\!\downarrow$ \\
\midrule
\multirow{3}{*}{BridgeAD}
  & RISE
    & $0.652{\scriptstyle\,(\pm 0.129)}$
    & $0.424{\scriptstyle\,(\pm 0.144)}$
    & $0.609{\scriptstyle\,(\pm 0.177)}$
    & $\phantom{0}320.0$ \\
  & EAGLE
    & $\mathbf{0.813}{\scriptstyle\,(\pm 0.026)}$
    & $\mathbf{0.151}{\scriptstyle\,(\pm 0.019)}$
    & $\mathbf{0.845}{\scriptstyle\,(\pm 0.049)}$
    & $1289.4$ \\
  & Ours
    & $0.769{\scriptstyle\,(\pm 0.094)}$
    & $0.210{\scriptstyle\,(\pm 0.058)}$
    & $0.775{\scriptstyle\,(\pm 0.111)}$
    & $\mathbf{\phantom{0}267.6}$ \\
\addlinespace[2pt]
\multirow{3}{*}{UniAD}
  & RISE
    & $0.735{\scriptstyle\,(\pm 0.033)}$
    & $0.340{\scriptstyle\,(\pm 0.045)}$
    & $0.688{\scriptstyle\,(\pm 0.065)}$
    & $1539.0$ \\
  & EAGLE
    & $\mathbf{0.876}{\scriptstyle\,(\pm 0.019)}$
    & $\mathbf{0.153}{\scriptstyle\,(\pm 0.026)}$
    & $\mathbf{0.900}{\scriptstyle\,(\pm 0.032)}$
    & $6360.6$ \\
  & Ours
    & $0.776{\scriptstyle\,(\pm 0.026)}$
    & $0.154{\scriptstyle\,(\pm 0.031)}$
    & $0.753{\scriptstyle\,(\pm 0.051)}$
    & $\mathbf{1269.8}$ \\
\addlinespace[2pt]
\multirow{3}{*}{GenAD}
  & RISE
    & $0.594{\scriptstyle\,(\pm 0.139)}$
    & $0.252{\scriptstyle\,(\pm 0.087)}$
    & $0.483{\scriptstyle\,(\pm 0.204)}$
    & $\phantom{0}203.4$ \\
  & EAGLE
    & $\mathbf{0.836}{\scriptstyle\,(\pm 0.057)}$
    & $\mathbf{0.082}{\scriptstyle\,(\pm 0.042)}$
    & $\mathbf{0.834}{\scriptstyle\,(\pm 0.067)}$
    & $\phantom{0}658.4$ \\
  & Ours
    & $0.778{\scriptstyle\,(\pm 0.077)}$
    & $0.109{\scriptstyle\,(\pm 0.030)}$
    & $0.729{\scriptstyle\,(\pm 0.114)}$
    & $\mathbf{\phantom{0}173.6}$ \\
\bottomrule
\end{tabular}
\end{table}

\section{High-ADE Identification by Quantile Thresholds}
\label{app:quantile-analysis}

\subsection{In-Domain Quantile Identification}
\label{app:quantile-indomain}

The Spearman $\rho$ and AUROC metrics in Section~\ref{sec:exp-association} capture monotonic association across the full ADE distribution. A complementary question is whether attribution statistics can isolate the small fraction of samples whose ADE is sharply elevated. We binarize ADE at three thresholds, taking the top $10\%$, $20\%$, and $30\%$ as positive, and report AUROC in the same univariate-versus-joint setup as the main table. Table~\ref{tab:quantile-indomain} gives the in-domain numbers.

The dominant univariate predictor stays the same as in the main analysis at every threshold: $H$ leads on BridgeAD and UniAD, $\sigma^2_{\mathrm{sp}}$ leads on GenAD. As the threshold relaxes from $10\%$ to $30\%$, AUROC drops uniformly because the positive class begins to absorb mid-risk samples and the discrimination task itself becomes harder. The joint model keeps a clear lead over its components throughout.

\begin{table}[h]
\centering
\footnotesize
\caption{In-domain AUROC for identifying high-ADE samples at three thresholds. Positive class is the ADE-top-$k\%$ within the validation pool. Numbers in parentheses are bootstrap half-widths over 100 resamples.}
\label{tab:quantile-indomain}
\setlength{\tabcolsep}{6pt}
\renewcommand{\arraystretch}{1.15}
\begin{tabular}{l ccc}
\toprule
Feature & BridgeAD & UniAD & GenAD \\
\midrule
\multicolumn{4}{l}{\emph{Threshold $k = 10\%$}} \\
\quad $H$
  & $0.767{\scriptstyle\,(\pm 0.042)}$
  & $0.735{\scriptstyle\,(\pm 0.038)}$
  & $0.529{\scriptstyle\,(\pm 0.047)}$ \\
\quad $\sigma^2_{\mathrm{sp}}$
  & $0.596{\scriptstyle\,(\pm 0.051)}$
  & $0.565{\scriptstyle\,(\pm 0.056)}$
  & $0.720{\scriptstyle\,(\pm 0.038)}$ \\
\quad $\mathrm{Gini}_{\mathrm{cam}}$
  & $0.673{\scriptstyle\,(\pm 0.047)}$
  & $0.632{\scriptstyle\,(\pm 0.045)}$
  & $0.550{\scriptstyle\,(\pm 0.049)}$ \\
\quad Joint
  & $0.820{\scriptstyle\,(\pm 0.028)}$
  & $0.772{\scriptstyle\,(\pm 0.033)}$
  & $0.748{\scriptstyle\,(\pm 0.039)}$ \\
\addlinespace[2pt]
\multicolumn{4}{l}{\emph{Threshold $k = 20\%$}} \\
\quad $H$
  & $0.687{\scriptstyle\,(\pm 0.038)}$
  & $0.681{\scriptstyle\,(\pm 0.040)}$
  & $0.532{\scriptstyle\,(\pm 0.041)}$ \\
\quad $\sigma^2_{\mathrm{sp}}$
  & $0.560{\scriptstyle\,(\pm 0.036)}$
  & $0.608{\scriptstyle\,(\pm 0.051)}$
  & $0.690{\scriptstyle\,(\pm 0.037)}$ \\
\quad $\mathrm{Gini}_{\mathrm{cam}}$
  & $0.601{\scriptstyle\,(\pm 0.045)}$
  & $0.594{\scriptstyle\,(\pm 0.038)}$
  & $0.544{\scriptstyle\,(\pm 0.049)}$ \\
\quad Joint
  & $0.714{\scriptstyle\,(\pm 0.036)}$
  & $0.734{\scriptstyle\,(\pm 0.034)}$
  & $0.722{\scriptstyle\,(\pm 0.036)}$ \\
\addlinespace[2pt]
\multicolumn{4}{l}{\emph{Threshold $k = 30\%$}} \\
\quad $H$
  & $0.669{\scriptstyle\,(\pm 0.034)}$
  & $0.637{\scriptstyle\,(\pm 0.037)}$
  & $0.539{\scriptstyle\,(\pm 0.036)}$ \\
\quad $\sigma^2_{\mathrm{sp}}$
  & $0.536{\scriptstyle\,(\pm 0.032)}$
  & $0.612{\scriptstyle\,(\pm 0.048)}$
  & $0.666{\scriptstyle\,(\pm 0.032)}$ \\
\quad $\mathrm{Gini}_{\mathrm{cam}}$
  & $0.571{\scriptstyle\,(\pm 0.038)}$
  & $0.570{\scriptstyle\,(\pm 0.038)}$
  & $0.529{\scriptstyle\,(\pm 0.037)}$ \\
\quad Joint
  & $0.676{\scriptstyle\,(\pm 0.033)}$
  & $0.686{\scriptstyle\,(\pm 0.036)}$
  & $0.685{\scriptstyle\,(\pm 0.032)}$ \\
\bottomrule
\end{tabular}
\end{table}

\subsection{Cross-Scene Quantile Identification}
\label{app:oos-quantile-analysis}

The numbers in Appendix~\ref{app:quantile-analysis} reuse the same scene pool for fitting and evaluation. We repeat the analysis under an $80\%/20\%$ scene split: coefficients are estimated only on training scenes, and AUROC is computed on the held-out scenes for the same three thresholds (Table~\ref{tab:quantile-oos}).

Discrimination of high-ADE samples carries over to held-out scenes. At the strictest threshold $k = 10\%$, $H$ reaches $0.755$ on BridgeAD and $0.731$ on UniAD, and $\sigma^2_{\mathrm{sp}}$ reaches $0.716$ on GenAD, all close to their in-domain counterparts. The joint model attains $0.809$ / $0.776$ / $0.759$ for BridgeAD / UniAD / GenAD, again close to the in-domain values. Across the three thresholds the qualitative ordering of features is unchanged.

\begin{table}[h]
\centering
\footnotesize
\caption{Held-out AUROC under an $80\%/20\%$ scene split, at the same three ADE thresholds as Table~\ref{tab:quantile-indomain}. Positive class is the ADE-top-$k\%$ within each held-out split. Numbers in parentheses are bootstrap half-widths over 100 resamples.}
\label{tab:quantile-oos}
\setlength{\tabcolsep}{6pt}
\renewcommand{\arraystretch}{1.15}
\begin{tabular}{l ccc}
\toprule
Feature & BridgeAD & UniAD & GenAD \\
\midrule
\multicolumn{4}{l}{\emph{Threshold $k = 10\%$}} \\
\quad $H$
  & $0.755{\scriptstyle\,(\pm 0.088)}$
  & $0.731{\scriptstyle\,(\pm 0.080)}$
  & $0.587{\scriptstyle\,(\pm 0.075)}$ \\
\quad $\sigma^2_{\mathrm{sp}}$
  & $0.589{\scriptstyle\,(\pm 0.086)}$
  & $0.581{\scriptstyle\,(\pm 0.104)}$
  & $0.716{\scriptstyle\,(\pm 0.082)}$ \\
\quad $\mathrm{Gini}_{\mathrm{cam}}$
  & $0.675{\scriptstyle\,(\pm 0.113)}$
  & $0.630{\scriptstyle\,(\pm 0.086)}$
  & $0.560{\scriptstyle\,(\pm 0.073)}$ \\
\quad Joint
  & $0.809{\scriptstyle\,(\pm 0.057)}$
  & $0.776{\scriptstyle\,(\pm 0.047)}$
  & $0.759{\scriptstyle\,(\pm 0.055)}$ \\
\addlinespace[2pt]
\multicolumn{4}{l}{\emph{Threshold $k = 20\%$}} \\
\quad $H$
  & $0.681{\scriptstyle\,(\pm 0.079)}$
  & $0.676{\scriptstyle\,(\pm 0.100)}$
  & $0.551{\scriptstyle\,(\pm 0.045)}$ \\
\quad $\sigma^2_{\mathrm{sp}}$
  & $0.567{\scriptstyle\,(\pm 0.045)}$
  & $0.626{\scriptstyle\,(\pm 0.105)}$
  & $0.693{\scriptstyle\,(\pm 0.065)}$ \\
\quad $\mathrm{Gini}_{\mathrm{cam}}$
  & $0.618{\scriptstyle\,(\pm 0.078)}$
  & $0.611{\scriptstyle\,(\pm 0.078)}$
  & $0.555{\scriptstyle\,(\pm 0.052)}$ \\
\quad Joint
  & $0.715{\scriptstyle\,(\pm 0.070)}$
  & $0.741{\scriptstyle\,(\pm 0.069)}$
  & $0.731{\scriptstyle\,(\pm 0.047)}$ \\
\addlinespace[2pt]
\multicolumn{4}{l}{\emph{Threshold $k = 30\%$}} \\
\quad $H$
  & $0.665{\scriptstyle\,(\pm 0.074)}$
  & $0.639{\scriptstyle\,(\pm 0.096)}$
  & $0.554{\scriptstyle\,(\pm 0.055)}$ \\
\quad $\sigma^2_{\mathrm{sp}}$
  & $0.539{\scriptstyle\,(\pm 0.050)}$
  & $0.624{\scriptstyle\,(\pm 0.088)}$
  & $0.672{\scriptstyle\,(\pm 0.063)}$ \\
\quad $\mathrm{Gini}_{\mathrm{cam}}$
  & $0.588{\scriptstyle\,(\pm 0.080)}$
  & $0.587{\scriptstyle\,(\pm 0.074)}$
  & $0.553{\scriptstyle\,(\pm 0.054)}$ \\
\quad Joint
  & $0.678{\scriptstyle\,(\pm 0.070)}$
  & $0.698{\scriptstyle\,(\pm 0.089)}$
  & $0.700{\scriptstyle\,(\pm 0.075)}$ \\
\bottomrule
\end{tabular}
\end{table}

\section{Empirical Correlation Among Attribution Statistics}
\label{app:stat-correlation}

This appendix supplies the empirical correlation analysis referenced in Section~\ref{sec:statistics}. We compute pairwise Spearman correlations among the three attribution statistics on all nuScenes validation samples, separately for each planner.

\begin{figure}[h]
\centering
\includegraphics[width=\linewidth]{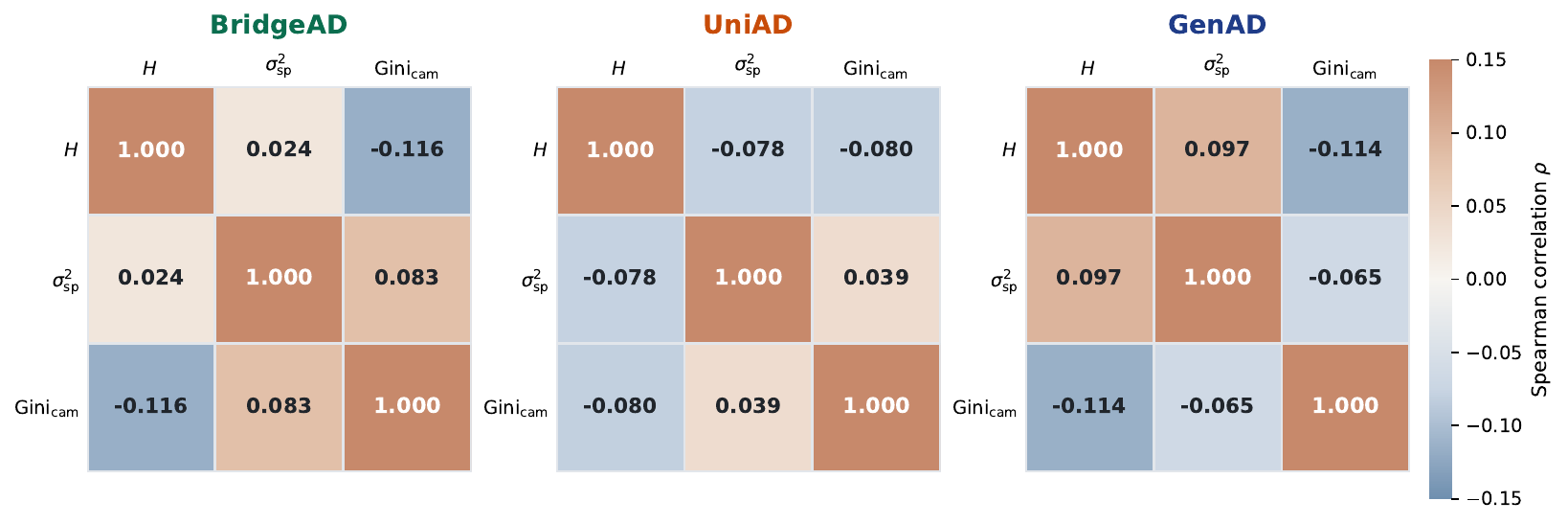}
\caption{Pairwise Spearman correlation among the three attribution statistics, computed on all nuScenes validation samples, shown separately for BridgeAD, UniAD, and GenAD. All off-diagonal entries satisfy $|\rho| \le 0.12$ across the three planners. Correlations are computed on the raw statistic values (without sign alignment to risk).}
\label{fig:stat-correlation}
\end{figure}

Across all three planners, every off-diagonal entry of the correlation matrix satisfies $|\rho| \le 0.12$ (Figure~\ref{fig:stat-correlation}). The largest absolute values arise between $H$ and $\mathrm{Gini}_{\mathrm{cam}}$ on BridgeAD ($\rho = -0.116$) and GenAD ($\rho = -0.114$). These two statistics are the only pair with a structural connection: by the chain rule $H = H(C) + \mathbb{E}_c[H(U,V\mid c)]$, where $H(C)$ is itself a summary of $p_{\mathrm{cam}}(c)$. The reason this connection does not produce a strong empirical correlation is a dynamic-range gap: $H(C) \le \log C \approx 1.79$ nats for $C=6$ cameras, whereas $\mathbb{E}_c[H(U,V\mid c)]$ is on the order of $\log(HW) \approx 14$ nats at the nuScenes image resolution, so the conditional spatial term dominates the sample-level variation of $H$. The other two pairs admit no algebraic dependence, and their correlations stay below $0.10$ on all planners.

The three statistics, while not formally orthogonal as functions of the saliency tensor, are therefore empirically near-independent at the sample level. The ridge and logistic models in Section~\ref{sec:exp-association} aggregate three near-independent signal sources, and the joint-model gain over univariate baselines reflects this additive structure rather than redundancy mitigation.

\section{Robustness to Extended Scene Controls}
\label{app:extended-controls}

The three controls used in Section~\ref{sec:exp-association} ($N_{\mathrm{obj}}$, $D_{\mathrm{obj}}$, $\mathrm{Gini}_{\mathrm{obj}}$) are constructed as structural counterparts to the three attribution statistics. A natural follow-up question is whether the attribution-risk association in Table~\ref{tab:association-main} survives a broader and less curated set of scene controls, in particular ones that prior work has flagged as potentially confounding for nuScenes open-loop planning evaluation. This section reports such a stress test using ten additional scene-level controls that cover ego motion, object class composition, near-field density, lateral layout, perception support, and interaction dynamics.

\paragraph{Setup.}
For each nuScenes validation sample we work in the ego ground-plane (BEV) coordinate frame, with the ego vehicle at the origin. Annotated objects are indexed by $i \in \mathcal{O}$. Each object $i$ has a class label $c_i$ from the standard nuScenes taxonomy, a ground-plane center $p_i = (x_i, y_i) \in \mathbb{R}^2$, a distance $d_i = \|p_i\|_2$, an annotated ground-plane velocity $v_i = (v_{x,i}, v_{y,i}) \in \mathbb{R}^2$, a speed $s_i = \|v_i\|_2$, and an associated radar-point count $r_i$ (the number of radar returns linked to the object's bounding box). Let $\mathcal{V}, \mathcal{B} \subseteq \mathcal{O}$ denote the subsets of vehicles and barriers/cones, and let $\mathcal{D} \subseteq \mathcal{O}$ denote the subset of dynamic actors (vehicles, pedestrians, bicycles, motorcycles). Let $v_e \in \mathbb{R}^2$ be the ego ground-plane velocity, read from the CAN bus when available and from the \texttt{ego\_status} field otherwise. All quantities are computed from current-frame nuScenes annotations. We deliberately exclude variables derived from the predicted future trajectory (predicted trajectory length, final displacement, lateral offset, curvature) and from future actor proximity or time-to-collision: such variables are near-surrogates for ADE or $\mathrm{collision\_any}$ rather than independent scene controls, and including them would inflate apparent control performance for reasons unrelated to scene structure.

\paragraph{Ego speed.}
The current ego translational speed,
\begin{equation}
  \mathrm{ego\_speed} = \|v_e\|_2,
  \label{eq:ctrl-ego-speed}
\end{equation}
captures how fast the ego vehicle is currently moving. It is the most direct candidate confound for ADE, since faster ego motion mechanically inflates per-step displacement error.

\paragraph{Barrier and cone count.}
The number of static traffic obstacles in the scene,
\begin{equation}
  \mathrm{barrier\_cone\_count} = |\mathcal{B}|,
  \label{eq:ctrl-barrier-count}
\end{equation}
where $\mathcal{B}$ collects nuScenes barrier and traffic-cone categories. This control captures construction or lane-restriction complexity that is not reflected in the dynamic actor count.

\paragraph{Near-field object count.}
The number of annotated objects within five meters of the ego vehicle,
\begin{equation}
  \mathrm{nearfield\_count}
  =
  \sum_{i \in \mathcal{O}} \mathbf{1}\!\bigl[d_i \le 5\bigr],
  \label{eq:ctrl-nearfield}
\end{equation}
captures local clutter regardless of class. The five-meter radius is chosen to match the typical ego footprint plus a one-vehicle margin.

\paragraph{Side object count.}
The number of objects whose bearing $\theta_i = \mathrm{atan2}(y_i, x_i)$ places them mainly to the left or right of ego rather than ahead or behind,
\begin{equation}
  \mathrm{side\_object\_count}
  =
  \sum_{i \in \mathcal{O}} \mathbf{1}\!\Bigl[\tfrac{\pi}{3} < |\theta_i| < \tfrac{2\pi}{3}\Bigr].
  \label{eq:ctrl-side-objects}
\end{equation}
This complements front/rear density and captures lane-change-relevant lateral context. The angular band $[\pi/3, 2\pi/3]$ corresponds to roughly the left and right thirds of the surround.

\paragraph{Mean radar points per object.}
A coarse proxy for sensor support per actor,
\begin{equation}
  \mathrm{mean\_radar\_pts}
  =
  \frac{1}{|\mathcal{O}_{r}|}
  \sum_{i \in \mathcal{O}_{r}} r_i,
  \label{eq:ctrl-mean-radar}
\end{equation}
where $\mathcal{O}_r \subseteq \mathcal{O}$ is the subset of objects with finite radar-point counts. Higher values indicate that the sensor suite is currently producing dense radar returns on annotated actors. Samples with no finite radar counts are reported as missing.

\paragraph{Nearest vehicle speed.}
The speed of the closest annotated vehicle,
\begin{equation}
  \mathrm{nearest\_veh\_speed}
  =
  s_{i^\star},
  \qquad
  i^\star = \arg\min_{i \in \mathcal{V}} d_i,
  \label{eq:ctrl-nearest-veh-speed}
\end{equation}
captures the kinematic state of the most-likely interaction partner. Samples with no annotated vehicle are reported as missing.

\paragraph{Mean object speed within 20 meters.}
The average speed of dynamic actors within a 20-meter radius,
\begin{equation}
  \mathrm{mean\_speed\_20m}
  =
  \frac{1}{|\mathcal{D}_{20}|}
  \sum_{i \in \mathcal{D}_{20}} s_i,
  \qquad
  \mathcal{D}_{20} = \{i \in \mathcal{D} : d_i \le 20\},
  \label{eq:ctrl-mean-speed-20m}
\end{equation}
summarizes the local kinematic activity. Samples with no dynamic actor within 20 meters are reported as missing.

\paragraph{Approaching object count.}
The number of dynamic actors whose relative motion is closing toward ego,
\begin{equation}
  \mathrm{approach\_count}
  =
  \sum_{i \in \mathcal{D}}
  \mathbf{1}\!\bigl[p_i^\top (v_i - v_e) < 0\bigr],
  \label{eq:ctrl-approach-count}
\end{equation}
where $p_i^\top (v_i - v_e) < 0$ is the standard closing-motion criterion: the relative velocity has a component pointing back toward the ego origin. This uses only current-frame quantities and does not access future ground truth.

\paragraph{Approaching vehicle count.}
The same closing-motion criterion restricted to vehicles,
\begin{equation}
  \mathrm{approach\_veh\_count}
  =
  \sum_{i \in \mathcal{V}}
  \mathbf{1}\!\bigl[p_i^\top (v_i - v_e) < 0\bigr].
  \label{eq:ctrl-approach-veh}
\end{equation}
Vehicles are the dominant collision-relevant actor class, and isolating them tests whether the approaching-actor signal lives in the vehicle subset specifically.

\paragraph{Dynamic actor ratio.}
The fraction of annotated objects that are dynamic actor categories,
\begin{equation}
  \mathrm{dyn\_ratio}
  =
  \frac{|\mathcal{D}|}{|\mathcal{O}|},
  \label{eq:ctrl-dyn-ratio}
\end{equation}
captures the composition of the scene independent of its absolute size. Samples with $|\mathcal{O}| = 0$ are reported as missing.

\paragraph{Univariate associations.}
Table~\ref{tab:extended-controls-univariate} reports the sign-aligned Spearman correlation with ADE and the sign-aligned AUROC for $\mathrm{collision\_any}$ for each of the ten controls. Most are weakly associated with planning risk on most planners. ADE associations stay below $0.10$ in the majority of cells, and collision AUROCs largely sit between $0.50$ and $0.65$. Two patterns are worth singling out before turning to the joint model.

\begin{table}[h]
\centering
\footnotesize
\caption{Univariate association of ten extended scene controls with planning-risk proxies. $\rho_{\text{ADE}}$ is the Spearman correlation with ADE. coll.\ is the AUROC for $\mathrm{collision\_any}$. All values are sign-aligned to be positive in the risk direction, matching the convention of Table~\ref{tab:association-main}.}
\label{tab:extended-controls-univariate}
\setlength{\tabcolsep}{6pt}
\renewcommand{\arraystretch}{1.15}
\begin{tabular}{l cc cc cc}
\toprule
& \multicolumn{2}{c}{BridgeAD}
& \multicolumn{2}{c}{UniAD}
& \multicolumn{2}{c}{GenAD} \\
\cmidrule(lr){2-3}\cmidrule(lr){4-5}\cmidrule(lr){6-7}
Control
  & $\rho_{\text{ADE}}\!\uparrow$ & coll.\ $\!\uparrow$
  & $\rho_{\text{ADE}}\!\uparrow$ & coll.\ $\!\uparrow$
  & $\rho_{\text{ADE}}\!\uparrow$ & coll.\ $\!\uparrow$ \\
\midrule
Ego speed
  & $0.061$ & $0.590$
  & $0.399$ & $0.509$
  & $0.007$ & $0.648$ \\
Barrier/cone count
  & $0.090$ & $0.549$
  & $0.021$ & $0.586$
  & $0.054$ & $0.519$ \\
Near-field count ($\le 5$m)
  & $0.034$ & $0.665$
  & $0.004$ & $0.617$
  & $0.044$ & $0.518$ \\
Side object count
  & $0.061$ & $0.629$
  & $0.025$ & $0.669$
  & $0.016$ & $0.551$ \\
Mean radar points/object
  & $0.155$ & $0.519$
  & $0.038$ & $0.529$
  & $0.053$ & $0.573$ \\
Nearest vehicle speed
  & $0.012$ & $0.588$
  & $0.150$ & $0.534$
  & $0.015$ & $0.505$ \\
Mean speed within 20m
  & $0.054$ & $0.543$
  & $0.078$ & $0.528$
  & $0.051$ & $0.603$ \\
Approaching object count
  & $0.079$ & $0.631$
  & $0.013$ & $0.607$
  & $0.014$ & $0.533$ \\
Approaching vehicle count
  & $0.144$ & $0.602$
  & $0.014$ & $0.596$
  & $0.023$ & $0.511$ \\
Dynamic actor ratio
  & $0.057$ & $0.503$
  & $0.081$ & $0.536$
  & $0.021$ & $0.562$ \\
\bottomrule
\end{tabular}
\end{table}

\paragraph{Observation: ego speed is unusually predictive of UniAD ADE.}
Among the ten univariate controls, ego speed shows a notably larger ADE association on UniAD ($\rho_{\text{ADE}} = 0.399$) than on BridgeAD ($0.061$) or GenAD ($0.007$), while no other control on UniAD reaches a comparable magnitude. This pattern is qualitatively consistent with prior critiques of nuScenes open-loop planning evaluation, which argue that planners incorporating ego status, especially ego velocity, can rely heavily on such state cues and may be evaluated under metrics that do not fully separate perception-driven planning from ego-motion extrapolation~\cite{li2024egostatus,zhai2023rethinking}. We treat this as an independent diagnostic observation rather than a claim about UniAD itself: our framework recovers, from attribution structure on the input side, a phenomenon that has previously been raised on the evaluation side. The next paragraph asks the question this observation makes pressing, namely whether the attribution-statistic signal in Table~\ref{tab:association-main} merely re-encodes ego speed.

\paragraph{Joint model: attribution statistics carry information beyond extended controls.}
Table~\ref{tab:extended-controls-joint} reports three joint models: ten controls alone, attribution statistics alone (reproduced from Table~\ref{tab:association-main}), and the union of the two. Three patterns emerge.

First, on BridgeAD and GenAD, the ten-control joint model is weak. $\rho_{\text{ADE}}$ reaches only $0.071$ and $0.025$, and collision AUROC sits at $0.702$ and $0.477$, while attribution statistics alone reach $\rho_{\text{ADE}}$ near $0.31$ and AUROC near $0.77$. Adding the ten controls on top of attribution statistics produces only a small further gain on these two planners. $\rho_{\text{ADE}}$ rises from $0.310$ to $0.332$ on BridgeAD and from $0.307$ to $0.309$ on GenAD. On these two planners, attribution structure absorbs essentially the entire risk-relevant scene signal that the controls can offer.

Second, UniAD departs from this pattern. The ten-control joint reaches $\rho_{\text{ADE}} = 0.405$, exceeding attribution statistics alone ($0.299$). This excess is essentially traceable to ego speed: removing ego speed from the control set drops the joint $\rho_{\text{ADE}}$ to a level comparable to the BridgeAD and GenAD numbers above. UniAD's ADE on nuScenes therefore does carry a substantial ego-motion component that attribution statistics, computed from the visual saliency tensor, do not see.

Third, even on UniAD, attribution statistics provide independent information once ego speed is partialled out. The combined model attains $\rho_{\text{ADE}} = 0.478$, exceeding both the controls-only model ($0.405$) and attribution-only ($0.299$). The same pattern holds for collision AUROC, where the combined model reaches $0.794$ against $0.689$ for controls-only and $0.770$ for attribution-only. The signal that attribution structure carries on UniAD is therefore complementary to ego speed rather than a re-encoding of it.

\begin{table}[h]
\centering
\footnotesize
\caption{Joint models on the ten extended controls, the three attribution statistics, and their union. Attribution-only numbers are reproduced from Table~\ref{tab:association-main}.}
\label{tab:extended-controls-joint}
\setlength{\tabcolsep}{6pt}
\renewcommand{\arraystretch}{1.15}
\begin{tabular}{l cc cc cc}
\toprule
& \multicolumn{2}{c}{BridgeAD}
& \multicolumn{2}{c}{UniAD}
& \multicolumn{2}{c}{GenAD} \\
\cmidrule(lr){2-3}\cmidrule(lr){4-5}\cmidrule(lr){6-7}
Joint model
  & $\rho_{\text{ADE}}\!\uparrow$ & coll.\ $\!\uparrow$
  & $\rho_{\text{ADE}}\!\uparrow$ & coll.\ $\!\uparrow$
  & $\rho_{\text{ADE}}\!\uparrow$ & coll.\ $\!\uparrow$ \\
\midrule
10 controls only
  & $0.071$ & $0.702$
  & $0.405$ & $0.689$
  & $0.025$ & $0.477$ \\
Attribution stats only
  & $0.310$ & $0.768$
  & $0.299$ & $0.770$
  & $0.307$ & $0.765$ \\
10 controls $+$ Attribution stats
  & $0.332$ & $0.802$
  & $0.478$ & $0.794$
  & $0.309$ & $0.772$ \\
\bottomrule
\end{tabular}
\end{table}

Taken together, these results extend the controls analysis in Section~\ref{sec:exp-association}. The risk-relevant signal carried by attribution statistics is not a re-encoding of ego motion, near-field object density, perception support, or interaction dynamics. On two of three planners, attribution structure dominates. On the third, where ego status is itself unusually predictive, attribution structure remains independently informative.

\section{Broader Impacts}
\label{app:broader-impacts}

This work proposes a post-hoc diagnostic framework for 
end-to-end autonomous driving planners. Because the target 
domain is safety-critical, we briefly discuss the dual 
implications of treating attribution as a measurement object 
rather than a per-sample visualization.

\paragraph{Positive impacts.}
The framework supports scalable pre-deployment auditing of 
end-to-end planners without modifying the base model or 
training auxiliary risk classifiers. By exposing 
visual-dependency structure as a small set of cross-sample 
comparable statistics, it makes it possible to flag scenes 
where a planner's reliance on visual evidence is concentrated, 
spatially diffuse, or unevenly distributed across views in 
ways that co-occur with elevated planning risk. This kind of 
input-side diagnostic is complementary to existing 
output-side safety evaluation (trajectory error, collision 
checks) and to closed-loop simulation testing, and may help 
developers and auditors locate failure-prone scenarios more 
efficiently than visual inspection alone. The framework also 
makes the way visual-dependency structure shifts with training 
data, model scale, or supervision signal, a measurable 
quantity, which we hope will be useful for the broader 
end-to-end driving research community.

\paragraph{Risks of misinterpretation.}
The most direct risk is that scalar attribution statistics, or their AUROC against a risk proxy is read as safety 
certifications rather than as diagnostic signals. A collision 
AUROC of around $0.77$ on open-loop nuScenes does not imply 
that a planner is safe to deploy, nor that the remaining 
$23\%$ of the ROC area is an acceptable residual risk. The 
statistics measure how a planner allocates its visual 
reliance, not whether the resulting trajectory is correct, 
and the association we report is a population-level 
statistical relationship rather than a per-sample guarantee. 
Operational decisions that depend on this signal should be 
paired with independent safety validation, including 
closed-loop testing, scenario-based verification, and 
domain-specific safety cases.

\paragraph{Scope and mitigation.}
Several aspects of the framework limit how far its conclusions 
should be carried. The empirical evidence is drawn from 
open-loop evaluation on nuScenes with three specific 
pretrained planners, and we do not claim that the same attribution-risk association holds for arbitrary planner architectures, sensor configurations, or operational design 
domains. Compressing a multi-camera saliency tensor into 
three scalar statistics also discards spatial structure and 
higher-order shape information, so the absence of a flagged 
anomaly in our statistics does not rule out unreliable visual 
reliance along axes the statistics do not capture. We frame 
the framework throughout as diagnostic rather than 
certificatory, report all results with scene-clustered 
bootstrap confidence intervals, and recommend that any 
operational use be paired with independent closed-loop 
safety validation rather than substituted for it.

\paragraph{Data and assets.}
All experiments use the publicly released nuScenes validation 
set and publicly released pretrained weights of the three 
planners, with no fine-tuning or modification. The work 
introduces no new data collection, no new deployed system, 
and no model release with elevated misuse potential.

\end{document}